\def\shownotes{0}
\newcommand{\authnote}[2]{[#1: #2]}
\newcommand{\authnote}[2]{}
\newcommand{\ak}[1]{{\color{green}\authnote{AK}{#1}}}
\title{In-N-Out: Pre-Training and Self-Training using Auxiliary Information for Out-of-Distribution Robustness}
\author{%
  Sang Michael Xie\thanks{Equal contribution.}, ~Ananya Kumar\footnotemark[1],
  ~Robbie Jones\footnotemark[1], ~Fereshte Khani, ~Tengyu Ma, ~Percy Liang \\
  Stanford University\\
  \texttt{\{xie, ananya, rmjones, fereshte, tengyuma, pliang\}@cs.stanford.edu}
}
\date{}
\newtheorem{remark}{Remark}
\newtheorem{example}{Example}
\newcommand{\probsetting}{\mathcal{S}}
\newcommand{\Routood}{R_\text{out}^\text{ood}}
\newcommand{\Rinnoutood}{R_{\text{in-out}}^{\text{ood}}}
\newcommand{\Rid}{R_\text{id}}
\newcommand{\Rood}{R_\text{ood}}
\newcommand{\Pid}{P_\text{id}}
\newcommand{\Pood}{P_\text{ood}}
\newcommand{\xidi}{{x_i^\text{id}}}
\newcommand{\zidi}{{z_i^\text{id}}}
\newcommand{\xoodi}{{x_i^\text{ood}}}
\newcommand{\zoodi}{{z_i^\text{ood}}}
\newcommand{\laux}{{\ell_\text{aux}}}
\newcommand{\fbs}{{\hat{f}_\text{bs}}}
\newcommand{\fin}{{\hat{f}_\text{in}}}
\newcommand{\Rpre}{{\hat{R}_\text{pre}}}
\newcommand{\Rtrans}{{\hat{R}_\text{trans}}}
\newcommand{\hout}{{\hat{h}_\text{out}}}
\newcommand{\fout}{{\hat{f}_\text{out}}}
\newcommand{\finnout}{\hat{f}}
\newcommand{\ginnout}{\hat{g}}
\newcommand{\Rst}{{\hat{R}_\text{st}}}
\newcommand{\Rstpop}{{R_\text{st}}}
\newcommand{\EE}{{EE}}
\newcommand{\Xz}{{X_Z}}
\newcommand{\Wu}{{W_U}}
\newcommand{\gin}{{\hat{g}_\text{in}}}
\newcommand{\gz}{{g_{\text{z-out}}}}
\newcommand{\gy}{{g_{\text{y-out}}}}
\newcommand{\hatgy}{{\hat g_{\text{y-out}}}}
\newcommand{\midsize}{{m_\text{id}}}
\newcommand{\moodsize}{{m_\text{ood}}}
\newcommand{\vspan}{\mbox{span}}
\newcommand{\xtest}{x_{te}}
\newcommand{\hatA}{\hat{A}}
\newcommand{\hatB}{\hat{B}}
\newcommand{\Astar}{{A^\star}}
\newcommand{\Bstar}{{B^\star}}
\newcommand{\Cstar}{{C^\star}}
\newcommand{\thetastar}{{\theta^{\star}}}
\newcommand{\gammaz}{\gamma_z}
\newcommand{\gammaw}{\gamma_w}
\newcommand{\tx}{\theta_x}
\newcommand{\tz}{\theta_z}
\newcommand{\tw}{\theta_w}
\newcommand{\tu}{\theta_u}
\newcommand{\txbaseline}{{\hat{\theta}_{{x, ols}}}}
\newcommand{\txinput}{{\hat{\theta}_{{x,in}}}}
\newcommand{\twoutput}{\hat{\theta}_{w, out}}
\newcommand{\twinnout}{\hat{\theta}_{w}}
\newcommand{\tzinput}{{\hat{\theta}_{{z,in}}}}
\newcommand{\sigmau}{\sigma_u}
\newcommand{\eigenmax}{\tau_{max}}
\newcommand{\eigenmin}{\tau_{min}}
\newcommand{\singularmin}{\tau_{min}}
\newcommand{\minv}{{-1}}
\DeclareMathOperator*{\argmin}{arg\,min}
\newcommand{\dotp}[2]{\ensuremath{#1^{\top} #2}}
\newlength{\widebarargwidth}
\newlength{\widebarargheight}
\newlength{\widebarargdepth}
\newenvironment{btHighlight}[1][]
{\begingroup\tikzset{bt@Highlight@par/.style={#1}}\begin{lrbox}{\@tempboxa}}
{\end{lrbox}\bt@HL@box[bt@Highlight@par]{\@tempboxa}\endgroup}
\newcommand\btHL[1][]{%
  \begin{btHighlight}[#1]\bgroup\aftergroup\bt@HL@endenv%
}
\def\bt@HL@endenv{%
  \end{btHighlight}%
  \egroup
}
\newcommand{\bt@HL@box}[2][]{%
  \tikz[#1]{%
    \pgfpathrectangle{\pgfpoint{1pt}{0pt}}{\pgfpoint{\wd #2}{\ht #2}}%
    \pgfusepath{use as bounding box}%
    \node[anchor=base west, fill=orange!30,outer sep=0pt,inner xsep=1pt, inner ysep=0pt, rounded corners=3pt, minimum height=\ht\strutbox+1pt,#1]{\raisebox{1pt}{\strut}\strut\usebox{#2}};
  }%
}
\definecolor{listinggray}{gray}{0.9}
\definecolor{lbcolor}{rgb}{0.9,0.9,0.9}
\ttfamily\color[rgb]{0,0,1}\scriptsize,
\ttfamily\color[rgb]{0.627,0.126,0.941}\scriptsize,
\newcommand\BR{\ensuremath{\mathbb{R}}}
\newcommand\R{\ensuremath{\mathbb{R}}} 
\newcommand\refeqn[1]{(\ref{eqn:#1})}
\newcommand\refsec[1]{Section~\ref{sec:#1}}
\newcommand\reffig[1]{Figure~\ref{fig:#1}}
\newcommand{\E}{\ensuremath{\mathop{\mathbb{E}}}} 
\begin{document}

\maketitle

\begin{abstract}
Consider a prediction setting with few in-distribution labeled examples and
many unlabeled examples both in- and out-of-distribution (OOD).
The goal is to learn a model which performs well both in-distribution and OOD.
In these settings, auxiliary information is often cheaply available for every input.
How should we best leverage this auxiliary information for the prediction task?
Empirically across three image and time-series datasets, and theoretically in a multi-task
linear regression setting, we show that (i) using auxiliary information as
input features improves in-distribution error but can hurt OOD error; but (ii)
using auxiliary information as outputs of auxiliary pre-training tasks improves
OOD error. To get the best of both worlds, we introduce In-N-Out, which first
trains a model with auxiliary inputs and uses it to pseudolabel all the
in-distribution inputs, then pre-trains a model on OOD auxiliary outputs and
fine-tunes this model with the pseudolabels (self-training). We show both
theoretically and empirically that In-N-Out outperforms auxiliary inputs or
outputs alone on both in-distribution and OOD error.
\end{abstract}

\section{Introduction}\label{sec:intro}

When models are tested on distributions that are different from the training
distribution, they typically suffer large drops in
performance~\citep{blitzer2007adaptation, szegedy2014intriguing,jia2017adversarial,albadawy2018tumor,hendrycks2019pretraining}.
For example, in remote sensing, central tasks include
predicting poverty, crop type, and land cover from satellite imagery for
downstream humanitarian, policy, and environmental
applications~\citep{xie2016transfer,jean2016combining,wang2020weakly,russwurm2020meta}.
In some developing African countries, labels are scarce due to the lack of economic resources to deploy human workers to
conduct expensive surveys~\citep{jean2016combining}.
To make accurate predictions in these countries, we must extrapolate to
out-of-distribution (OOD) examples across different geographic terrains and
political borders.

We consider a semi-supervised setting with few in-distribution labeled examples and many unlabeled examples from both in- and out-of-distribution (e.g., global satellite imagery).
While labels are scarce, auxiliary information is often cheaply available for every input and may provide some signal for the missing labels.
Auxiliary information can come from additional data sources (e.g., climate data from other satellites) or derived from the original input (e.g., background or non-visible spectrum image channels).
This auxiliary information is often discarded or not leveraged, and how to best use them is unclear.
One way is to use them directly as input features (\textbf{aux-inputs}); another is to
treat them as prediction outputs for an auxiliary task (\textbf{aux-outputs}) in pre-training.
Which approach leads to better in-distribution or OOD performance?

Aux-inputs provide more features to potentially improve in-distribution performance,
and one may hope that this also improves OOD performance. Indeed, previous
results on standard datasets show that
improvements in in-distribution accuracy correlate with improvements in OOD accuracy~\citep{recht2019doimagenet,taori2020measuring,xie2020selftraining,santurkar2020breeds}.
However, in this paper we find that aux-inputs can introduce more spurious correlations with the labels: as a result, while aux-inputs often improve in-distribution
accuracy, they can worsen OOD accuracy.
We give examples of this trend on CelebA~\citep{liu2015deep} and real-world satellite datasets in Sections~\ref{sec:main-results} and \ref{sec:input-choice}.

\begin{figure}
\begin{minipage}{.75\textwidth}
		\tikzset{
	-latex,auto,node distance =1 cm and 1 cm,semithick,
		state/.style ={circle, draw, minimum width = 5em, minimum height=5em, inner sep=1pt,text width=4em,text centered,},
		fstate/.style ={minimum width = 0.5 cm, inner sep=1pt, text width=9em,text centered,},
		point/.style = {circle, draw, inner sep=0.04cm,fill,node contents={}},
		bidirected/.style={latex-latex,dashed},
		el/.style = {inner sep=2pt, align=left, sloped},
	}
\centering
\scalebox{0.65}{
	\begin{tikzpicture}
	\def \w {4.5};
	\def \h {5}

	\node[state] (aux-in) at (0,0) {Aux-in model};
	\node[fstate] (x-in1)  at (-\h/4,-\w/2){Data\\ ($X$)};
	\node[fstate] (z-in1)  at (+\h/4,-\w/2){Auxiliary info\\ ($Z$)};
	\node[fstate] (y-out1)  at (0,+\w/2){Labels\\ ($Y$)};
	\draw[-{Latex[length=3mm]}] (z-in1) -- (aux-in);
	\draw[-{Latex[length=3mm]}] (x-in1) -- (aux-in);
	\draw[-{Latex[length=3mm]}] (aux-in) -- (y-out1);

	\node[state] (aux-out) at (\h,0) {Aux-out model};
        \node[fstate] (x-in2)  at (\h,-\w/2){Unlabeled data\\ ($X_{\text{un}}$)};
        \node[fstate] (z-out2)  at (\h,+\w/2){Unlabeled aux\\ ($Z_{\text{un}}$)};
		\draw[-{Latex[length=3mm]}] (z-in1) -- (aux-in);
	\draw[-{Latex[length=3mm]}] (x-in2) -- (aux-out);
	\draw[-{Latex[length=3mm]}] (aux-out) -- (z-out2);

	\node[state] (final-model) at (2*\h,0) {In-N-Out model};
	\node[fstate] (x-in1)  at (2*\h-\h/4,-\w/2){Data\\ ($X$)};
        \node[fstate] (x-in2)  at (2*\h+\h/4,-\w/2){In-distribution unlabeled data\\ ($X_{\text{un}}^{\text{id}}$)};
	\node[fstate] (y-out1)  at (2*\h - \h/4,+\w/2){Labels\\ ($Y$)};
        \node[fstate] (y-out2)  at (2*\h + \h/4,+\w/2){Pseudolabels\\ \small(Aux-in($X_{\text{un}}^{\text{id}},Z_{\text{un}}^{\text{id}}$))};
	\draw[-{Latex[length=3mm]}] (x-in1) -- (final-model);
	\draw[-{Latex[length=3mm]}] (x-in2) -- (final-model);
	\draw[-{Latex[length=3mm]}] (final-model) -- (y-out1);
	\draw[-{Latex[length=3mm]}] (final-model) -- (y-out2);

        \draw[dashed,-{Latex[length=3mm]}] (aux-out) -- (final-model) node[midway] {initialize};

        \node[el] at (0,-3*\w/4) {Step 1: Training (aux-in)};
        \node[el] at (\h,-3*\w/4) {Step 2: Pre-training (aux-out)};
	\node[el] at (2*\h,-3*\w/4) {Step 3: Fine-tuning \& self-training};
	\end{tikzpicture}}
\caption{\label{fig:in-n-out} A sketch of the In-N-Out algorithm which consists of three steps:
    1) use auxiliary information as input (Aux-in) to achieve good in-distribution performance,
    2) use auxiliary information as output in pre-training (Aux-out), to improve OOD performance,
	3) fine-tune the pretrained model from step 2 using the labeled data and in-distribution unlabeled data with pseudolabels generated from step 1 to improve in- and out-of-distribution.
  }
\end{minipage}
\hfill
\begin{minipage}{.23\textwidth}
  \centering
  \includegraphics[width=0.85\textwidth]{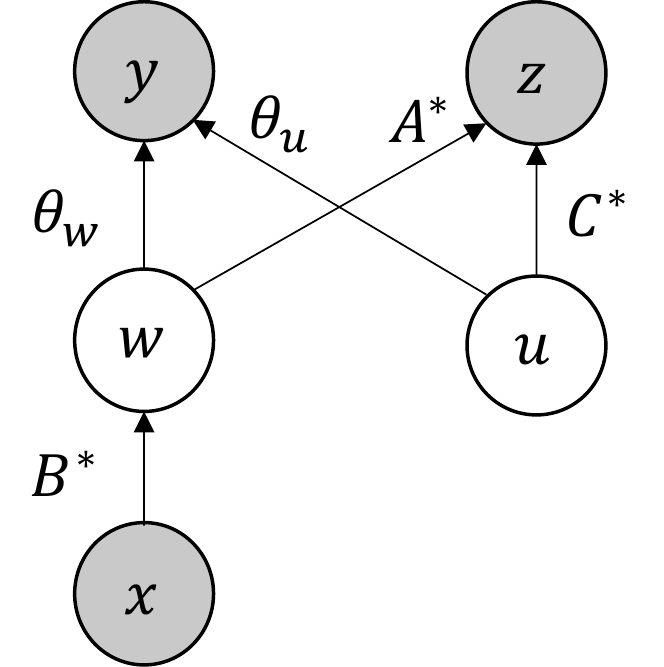}
  \caption{
    Graphical model for our theoretical setting: prediction task with input $x$, target $y$, and auxiliary information $z$, which is related to $y$ through the latent variable $w$ and latent noise $u$.
    }\label{fig:graphical-model}
\end{minipage}

\end{figure}

Conversely, aux-output methods such as pre-training may improve OOD performance
through auxiliary supervision~\citep{caruana97multitask,weiss2016survey,hendrycks2019pretraining}.
\citet{hendrycks2019pretraining} show that pre-training on ImageNet can improve
adversarial robustness, and~\citet{hendrycks2019selfsupervised} show that auxiliary self-supervision tasks can improve robustness to synthetic corruptions.
In this paper, we find that while aux-outputs improve OOD accuracy, the
in-distribution accuracy is worse than with aux-inputs. Thus, we elucidate a
tradeoff between in- and out-of-distribution accuracy that occurs when using auxiliary
information as inputs or outputs.

To theoretically study how to best use auxiliary information, we extend the
multi-task linear regression setting~\citep{du2020fewshot,tripuraneni2020multitask} to allow for distribution
shifts. We show that auxiliary information helps in-distribution error by providing useful features for predicting the target, but the relationship between the aux-inputs and the target can shift significantly OOD, worsening the OOD error. In contrast, the aux-outputs model first pre-trains on unlabeled data to learn a lower-dimensional representation and then solves the target task in the lower-dimensional space.
We prove that the aux-outputs model improves robustness to \emph{arbitrary} covariate shift compared to not using auxiliary information.

Can we do better than using auxiliary information as inputs or outputs alone?
We answer affirmatively by proposing the In-N-Out algorithm to combine the benefits of auxiliary inputs and outputs (Figure~\ref{fig:in-n-out}).
In-N-Out first uses an aux-inputs model, which has good in-distribution
accuracy, to pseudolabel in-distribution unlabeled data.
It then pre-trains a model using aux-outputs and finally fine-tunes this model
on the larger training set consisting of labeled and pseudolabeled data.
We prove that In-N-Out, which combines self-training and pre-training,
further improves both in-distribution and OOD error over the aux-outputs model.

We show empirical results on CelebA and two remote sensing tasks (land cover and cropland prediction) that parallel the theory.
On all datasets, In-N-Out improves OOD
accuracy and has competitive or better in-distribution accuracy over aux-inputs
or aux-outputs alone and improves 1--2\% in-distribution, 2--3\% OOD over not using auxiliary information on remote sensing tasks.
Ablations of In-N-Out show that In-N-Out achieves
similar improvements over pre-training or self-training alone (up to 5\% in-distribution, 1--2\% OOD on remote sensing tasks).
We also find that using OOD (rather than in-distribution) unlabeled examples for pre-training is crucial for OOD improvements.

\section{Setup}
\label{sec:setup}
Let $x \in \R^d$ be the input (e.g., a satellite image), $y \in \R$ be the target (e.g., crop type), and $z \in \R^T$ be the cheaply obtained auxiliary information either from additional sources (e.g., climate information) or derived from the original data (e.g., background).

\paragraph{Training data.} Let $\Pid$ and $\Pood$ denote the underlying distribution of $(x,y,z)$ triples in-distribution and out-of-distribution, respectively.
The training data consists of
(i) in-distribution labeled data $\{(x_i, y_i, z_i)\}_{i=1}^n \sim \Pid$,
(ii) in-distribution unlabeled data $\{(\xidi, \zidi)\}_{i=1}^{\midsize} \sim \Pid$, and 
(iii) out-of-distribution unlabeled data  $\{(\xoodi, \zoodi)\}_{i=1}^{\moodsize} \sim \Pood$. 

\paragraph{Goal and risk metrics.} Our goal is to learn a model from input and auxiliary information to the target, $f: \R^d \times \R^T \rightarrow \R$.
For a loss function $\ell$, the in-distribution population risk of the model $f$ is $\Rid(f) = \E_{x, y, z \sim \Pid}[ \ell( f(x, z), y ) ]$, and its OOD population risk is $\Rood(f) = \E_{x, y, z \sim \Pood}[ \ell( f(x, z), y ) ]$.

\subsection{Models}

We consider three common ways to use the auxiliary information ($z$) to learn a model.

\paragraph{Baseline.} The baseline minimizes the empirical risk on labeled data while ignoring the auxiliary information (accomplished by setting $z$ to 0):
\begin{align}
\label{eqn:base}
\fbs = \argmin_{f} \frac{1}{n} \sum_{i=1}^n \ell( f(x_i,0), y_i).
\end{align}

\paragraph{Aux-inputs.} The aux-inputs model minimizes the empirical risk on labeled data while using the auxiliary information as features:
\begin{align}
\label{eqn:aux-in}
	\fin = \argmin_{f} \frac{1}{n} \sum_{i=1}^n \ell( f(x_i, z_i), y_i).
\end{align}

\paragraph{Aux-outputs.} The aux-outputs model leverages the auxiliary information $z$ by using it as the prediction target of an auxiliary task,
in hopes that there is a low-dimensional feature representation that is common to predicting both $z$ and $y$.
Training the aux-outputs model consists of two steps:

In the \emph{pre-training} step, we use all the unlabeled data to learn a shared feature representation.
Let $h: \R^d \to \R^k$ denote a feature map and $\gz: \R^k \to \R^T$ denote a mapping from feature representation to the auxiliary outputs. Let $\laux$ denote the loss function for the auxiliary information.
We define the empirical risk of $h$ and $\gz$ as:
\begin{align}
\label{eqn:pretrain-aux-output-finite-data-empirical}
\Rpre(h, \gz) = \frac{1}{\midsize+\moodsize} \Big( \sum_{i=1}^{\midsize} \laux( \gz(h(\xidi)), \zidi) + \sum_{i=1}^{\moodsize} \laux( \gz(h(\xoodi)), \zoodi) \Big).
\end{align}
The estimate of the feature map is $\hout = \argmin_{h} \min_{\gz} \Rpre(h, \gz)$.

In the \emph{transfer} step, the model uses the pre-trained feature map $\hout$ and the labeled data to learn the mapping $\gy : \R^k \to \R$ from feature representation to target $y$.
We define the transfer empirical risk as:
\begin{align}
\Rtrans(\hout, \gy) = \frac{1}{n} \sum_{i=1}^n \ell( \gy(\hout(x_i)), y_i)
\end{align} 
The estimate of the target mapping is $\hatgy = \argmin_{\gy} \Rtrans(\hout, \gy)$. The final aux-outputs model is
\begin{align}
\label{eqn:aux-out}
    \fout(x, z) = \hatgy(\hout(x)).
\end{align}
Like the baseline model, the aux-outputs model ignores the auxiliary information for prediction.

\section{Theoretical Analysis of Aux-inputs and Aux-outputs Models}
\label{sec:input-output-theory}
We now analyze the baseline, aux-inputs, and aux-outputs models introduced in \refsec{setup}.
Our setup extends a linear regression setting commonly used for analyzing multi-task problems~\citep{du2020fewshot,tripuraneni2020multitask}.

\textbf{Setup.} See \reffig{graphical-model} for the graphical model.
Let $w = \Bstar x \in \R^k$ be a low-dimensional latent feature $(k\le d)$
shared between auxiliary information $z$ and the target $y$.
Let $u \in \R^m$ denote unobserved latent variables not captured in $x$.
We assume $z$ and $y$ are linear functions of $u$ and $w$:
\begin{align}
y &= \dotp{\tw}{w} + \dotp{\tu}{u} + \epsilon,\\
z &= \Astar w + \Cstar u,
\end{align}
where $\epsilon \sim P_{\epsilon}$ denotes noise with mean $0$ and variance $\sigma^2$.
As in~\cite{du2020fewshot}, we assume the dimension of the auxiliary information $T$ is greater than the feature dimension $k$, that is $T \geq k$, and that $\Astar, \Bstar$ and $\Cstar$ have full rank (rank $k$). We also assume $T \geq m$, where $m$ is the dimension of $u$.
  
\textbf{Data.}
Let $P_\text{x}$ and $P_\text{u}$ denote the distribution of $x$ and $u$ in-distribution (ID), and
let $P_\text{x}'$, $P_\text{u}'$ denote the distribution $x$ and $u$ OOD.
We assume $x$ and $u$ are independent, have distributions with bounded density everywhere, and have invertible covariance matrices.
We assume the mean of $u$ is zero in- and out-of-distribution\footnote{This is not limiting because bias in $z$ can be folded into $x$.}.
We assume we have $n \geq m+d$ in-distribution labeled training examples and unlimited access to unlabeled data both ID and OOD, a common assumption in unsupervised domain adaptation theory~\citep{sugiyama2007covariate,kumar2020gradual,raghunathan2020understanding}.

\textbf{Loss metrics.} We use the squared loss for the target and auxiliary losses: $\ell (\hat y, y) = (y- \hat y)^2$ and $\laux (z,z') = \|z - z'\|^2_2$.

\textbf{Models.}
We assume all model families ($f$, $h$, $\gz$, $\gy$) in Section~\ref{sec:setup} are linear.

Let $\probsetting = (\Astar, \Bstar, \Cstar, \tw, \tu, P_\text{x}, P_\text{u})$ denote a problem setting which satisfies all the above assumptions.

\subsection{Auxiliary inputs help in-distribution, but can hurt OOD}

We first show that the aux-inputs model \refeqn{aux-in} performs better than the baseline model \refeqn{base} in-distribution.
Intuitively, the target $y$ depends on both the inputs $x$ (through $w$) and latent variable $u$ (Figure~\ref{fig:graphical-model}).
The baseline model only uses $x$ to predict $y$; thus it cannot capture the variation in $y$ due to $u$.
On the other hand,
the aux-inputs model uses $x$ and $z$ to predict $y$.
Since $z$ is a function of $x$ (through $w$) and $u$, $u$ can be recovered from $x$ and $z$ by inverting this relation.
Note that $u$ is unobserved but implicitly recovered.
The aux-inputs model can then combine $u$ and $x$ to predict $y$ better.

Let $\sigmau^2 = \E_{u \sim P_u}[(\dotp{\tu}{u})^2]$ denote the (in-distribution) variance of $y$ due to the latent variables $u$.
The following proposition shows that if $\sigmau^2 > 0$ then with enough training examples the aux-inputs model has lower in-distribution population risk than the baseline model.\footnote{Since $z$ is typically low-dimensional and $x$ is high-dimensional (e.g., images), the aux-inputs model needs only a slightly larger number of examples before it outperforms the baseline.}

\newcommand{\auxInputInDomainText}{
  For all problem settings $\probsetting$, $P_{\epsilon}$, assuming regularity conditions (bounded $x, u$, sub-Gaussian noise $\epsilon$, and $T = m$), and $\sigmau^2 > 0$,
for all $\delta > 0$,
there exists $N$ such that for $n \geq N$ number of training points,
with probability at least $1 - \delta$ over the training examples,
the aux-inputs model improves over the baseline:
\begin{equation}
  \Rid(\fin) < \Rid(\fbs).
\end{equation}
}
\begin{proposition}
\label{prop:auxInputInDomain}
\auxInputInDomainText{}
\end{proposition}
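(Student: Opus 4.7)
The plan is to reduce the finite-sample statement to two clean population-level facts and then close the gap with a standard least-squares concentration argument. First, I will show that the best linear predictor in the aux-inputs class attains population risk exactly $\sigma^2$, because it can recover the latent $u$ from $(x,z)$ and cancel the $\tu^\top u$ component of $y$. Second, I will show that \emph{no} linear predictor of $x$ alone can achieve population risk below $\sigma^2 + \sigmau^2$. Together these give a strict gap of $\sigmau^2$ between the two population optima, which OLS excess-risk bounds will allow me to preserve at finite $n$.

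For the population analysis, the regularity assumption $T = m$ combined with $\Cstar$ having full rank $k$ (I would first confirm that under $T = m$ this makes $\Cstar$ square and invertible) and the identities $w = \Bstar x$, $z = \Astar w + \Cstar u$ yield $u = (\Cstar)^{-1}(z - \Astar \Bstar x)$. Hence the linear function
\begin{equation*}
f^\star(x,z) \;=\; \tw^\top \Bstar x + \tu^\top (\Cstar)^{-1}(z - \Astar \Bstar x)
\end{equation*}
satisfies $y - f^\star(x,z) = \epsilon$ almost surely, so $\Rid(f^\star) = \sigma^2$. For the baseline class, any linear $f(x) = \beta^\top x$ leaves residual $(\tw^\top \Bstar - \beta^\top)x + \tu^\top u + \epsilon$. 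Using independence of $x$, $u$, $\epsilon$ together with $\E u = 0$ and $\E \epsilon = 0$, all cross-terms vanish and
\begin{equation*}
\E[(y - \beta^\top x)^2] \;=\; \E[((\tw^\top\Bstar - \beta^\top) x)^2] + \sigmau^2 + \sigma^2 \;\geq\; \sigmau^2 + \sigma^2,
\end{equation*}
so the population minimum of the baseline class is at least $\sigma^2 + \sigmau^2$, strictly larger than $\sigma^2$ by the hypothesis $\sigmau^2 > 0$.

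For the finite-sample step, the boundedness of $x,u$, sub-Gaussianity of $\epsilon$, invertibility of the covariance matrices of $P_\text{x}$ and $P_\text{u}$, and $n \geq m + d$ put us in a well-conditioned linear-regression regime for both ERMs. Standard OLS excess-risk bounds then yield, with probability at least $1-\delta$, excess risks over the respective population minimizers of order $(d + \log(1/\delta))/n$ for $\fbs$ and $(d + T + \log(1/\delta))/n$ for $\fin$, with constants depending on $\probsetting$. Choosing $N$ large enough that both excess risks are below $\sigmau^2/3$ gives $\Rid(\fin) \leq \sigma^2 + \sigmau^2/3 < \sigma^2 + 2\sigmau^2/3 \leq \Rid(\fbs)$.

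The main obstacle is conceptual rather than technical: it is the baseline lower bound, where one must verify that the $\tu^\top u$ contribution genuinely cannot be absorbed into any linear function of $x$. This is precisely what the independence of $x$ and $u$ together with $\E u = 0$ delivers, via vanishing of the cross-terms above. Once this point is nailed down, the remainder is a routine invocation of linear-regression concentration under the stated regularity conditions, together with the observation that the aux-inputs design vector $(x,z)$ still has a positive-definite covariance (inherited from those of $x$ and $u$ via the invertible map) so that OLS is well-posed.
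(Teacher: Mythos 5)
Your proposal is correct and follows essentially the same route as the paper's proof: lower-bound the baseline by the Bayes risk $\sigma^2 + \sigmau^2$ using independence and mean-zero arguments, show the aux-inputs model is a well-specified linear regression in $(x,z)$ by inverting $\Cstar$ (via $T=m$), and apply a standard random-design OLS excess-risk bound of order $d/n$ to conclude for $n$ large enough. The only cosmetic difference is that you also invoke a concentration bound for the baseline's excess risk, which is unnecessary since the population lower bound on $\Rid(\fbs)$ already holds deterministically.
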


Although using $z$ as input leads to better in-distribution performance, we show that the aux-inputs model can perform worse than the baseline model OOD for any number of training examples.
Intuitively, the aux-inputs model uses $z$, which can be unreliable OOD because $z$ depends on $u$ and $u$ can shift OOD.
In more detail, the aux-inputs model learns to predict $\hat{y} = \dotp{\txinput}{x} + \dotp{\tzinput}{z}$, where the true output $y = \dotp{\tx}{x} + \dotp{\tz}{z}$, and $\tzinput$ is an approximation to the true parameter $\tz$, that has some error.
Out-of-distribution $u$ and hence $z$ can have very high variance, which would magnify $(\tzinput - \tz)^\top z$ and lead to bad predictions.

\newcommand{\auxInputOODExampleText}{
There exists a problem setting $\probsetting$, $P_{\epsilon}$, such that for every $n$, there is some test distribution $P_x', P_u'$ with:
\begin{equation}
\E[\Rood(\fin)] > \E[\Rood(\fbs)]
\end{equation}
}

\begin{example}
\label{ex:input-model-bad}
\auxInputOODExampleText{}
\end{example}

\subsection{Pre-training improves risk under arbitrary covariate shift}

While using $z$ as inputs (aux-inputs) can worsen performance relative to the baseline, our first main result is that the aux-outputs model (which pre-trains to predict $z$ from $x$, and then transfers the learned representation to predict $y$ from $x$) outperforms the baseline model for all test distributions.

\textbf{Intuition.}
Referring to Figure~\ref{fig:graphical-model}, we see that the mapping from inputs $x$ to auxiliary $z$ passes through the lower dimensional features $w$.
In the pre-training step, the aux-outputs model predicts $z$ from $x$ using a low rank linear model, and we show that this recovers the `bottleneck' features $w$ (up to symmetries; more formally we recover the rowspace of $\Bstar$).
In the transfer step, the aux-outputs model learns a linear map from the lower-dimensional $w$ to $y$, while the baseline predicts $y$ directly from $x$.
To warm up, \emph{without distribution shift}, the expected excess risk only depends on the dimension of the input, and not the conditioning.
That is, the expected excess risk in linear regression is exactly $d \sigma^2 / n$, where $d$ is the input dimension, so the aux-outputs trivially improves over the baseline since $\mbox{dim}(w) < \mbox{dim}(x)$.
In contrast, the \emph{worst case risk under distribution shift depends on the conditioning of the data}, which could be worse for $w$ than $x$.
Our proof shows that the worst case risk (over all $x$ and $u$) is still better for the aux-outputs model because projecting to the low-dimensional feature representation ``zeroes-out'' some error directions.

\newcommand{\auxOutputHelpsEverywhereText}{
  For all problem settings $\probsetting$, noise distributions $P_{\epsilon}$, test distributions $P_\text{x}'$, $ P_\text{u}'$, and $n \geq m + d$ number of training points:
\begin{equation}
  \E[\Rood(\fout)] \leq \E[\Rood(\fbs)].
\end{equation}
}

\begin{theorem}
\label{thm:auxOutputHelpsEverywhere}
\auxOutputHelpsEverywhereText{}
\end{theorem}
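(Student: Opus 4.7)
I will show that, with unlimited unlabeled data, the pre-training step recovers a feature map that is a linear reparameterization of $\Bstar$, so that $\fout$ reduces to OLS regressing $y$ on $w = \Bstar x$ while $\fbs$ is OLS regressing $y$ on $x$. Both estimators face the same effective noise $\eta_i := \dotp{\tu}{u_i} + \epsilon_i$ (mean zero, variance $\sigmau^2 + \sigma^2$ given $X$), so the expected OOD risks differ only through the training-induced estimation errors, and I will bound the aux-outputs error by the baseline error pointwise in $\xtest$ and $X$ via a projection-matrix inequality.

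\emph{Step 1 (pre-training recovery).} In the population limit $\Rpre(h, \gz) \to \E \| \gz(h(x)) - \Astar \Bstar x - \Cstar u \|^2$, and since $u$ is mean zero and independent of $x$, the Bayes-optimal prediction of $z$ from $x$ is $\Astar \Bstar x$, a rank-$k$ linear map. Any linear $h : \R^d \to \R^k$ that, together with some linear $\gz$, achieves this Bayes-optimal value must have rowspace equal to that of $\Bstar$, i.e., $\hout(x) = Q \Bstar x$ for some invertible $Q \in \R^{k \times k}$. Absorbing $Q$ into $\hatgy$, I assume without loss of generality that $\hout(x) = \Bstar x = w$, so that the transfer step returns $\twoutput = (W^\top W)^{-1} W^\top Y$ with $W := X \Bstar^\top$.

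\emph{Step 2 (reduction to OLS variances).} Writing $Y = W \tw + \eta$, one has $\txbaseline - \Bstar^\top \tw = (X^\top X)^{-1} X^\top \eta$ and $\twoutput - \tw = (W^\top W)^{-1} W^\top \eta$. For a test point $(\xtest, \utest, \epstest)$ independent of training, the identity $\tw^\top \Bstar \xtest = \tw^\top \wtest$ together with independence of $\utest, \epstest$ from training and from $\xtest$ implies that $\E[\Rood(f)]$ splits as the expected squared training estimation error plus the common term $\E[(\dotp{\tu}{\utest})^2] + \sigma^2$. Applying the OLS variance formula conditional on $X$, the training-induced error is $(\sigmau^2+\sigma^2)\,\xtest^\top (X^\top X)^{-1} \xtest$ for the baseline and $(\sigmau^2+\sigma^2)\,\xtest^\top \Bstar^\top (\Bstar X^\top X \Bstar^\top)^{-1} \Bstar \xtest$ for aux-outputs.

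\emph{Step 3 (matrix inequality and main obstacle).} It therefore suffices to prove $\Bstar^\top (\Bstar X^\top X \Bstar^\top)^{-1} \Bstar \preceq (X^\top X)^{-1}$ as $d \times d$ matrices. Conjugating by $(X^\top X)^{1/2}$ and setting $M := \Bstar (X^\top X)^{1/2} \in \R^{k \times d}$, this reduces to $M^\top (MM^\top)^{-1} M \preceq I_d$, which holds because the left-hand side is the orthogonal projection onto the rowspace of $M$. Integrating over $\xtest$ and $X$ yields $\E[\Rood(\fout)] \leq \E[\Rood(\fbs)]$. The main obstacle is really Step 1: showing rigorously that pre-training identifies $\Bstar$ up to a linear change of basis (using $T \geq k$ and full-rank $\Astar, \Bstar, \Cstar$) and that this change is absorbed losslessly by the transfer step; once that is granted, Step 3 is a one-line projection identity and Step 2 is routine OLS bookkeeping.
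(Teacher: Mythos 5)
Your proposal is correct and follows essentially the same route as the paper: recover the rowspace of $\Bstar$ from the population pre-training objective, reduce both estimators to OLS with common effective noise of variance $\sigmau^2+\sigma^2$, and conclude via the PSD inequality $\hatB^\top(\hatB X^\top X \hatB^\top)^{-1}\hatB \preceq (X^\top X)^{-1}$, which the paper also establishes by conjugating with a square root of $X^\top X$ and recognizing an orthogonal projection. The only differences are cosmetic (the paper uses a general factor $G$ with $X^\top X = GG^\top$ and notes $M=M^2=MM^\top$ rather than naming the projection).
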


See Appendix~\ref{appendix:proofs} for the proof.

\section{In-N-Out: combining auxiliary inputs and outputs}
\label{sec:in-n-out}
We propose the In-N-Out algorithm, which combines both the aux-inputs and aux-outputs models for further complementary gains (Figure~\ref{fig:in-n-out}).
As a reminder: (i) The aux-inputs model ($x, z \to y$) is good in-distribution, but bad OOD because $z$ can be misleading OOD. (ii) The aux-outputs model ($x \to y$) is better than the baseline OOD, but worse than aux-inputs in-distribution because it doesn’t use $z$. (iii) We propose the In-N-Out model ($x \to y$), which uses pseudolabels from aux-inputs (stronger model) in-distribution to transfer in-distribution accuracy to the aux-outputs model.
The In-N-Out model does not use $z$ to make predictions since $z$ can be misleading / spurious OOD.

In more detail, we use the aux-inputs model (which is good in-distribution) to pseudolabel in-distribution unlabeled data.
The pseudolabeled data provides more effective training samples (self-training) to fine-tune an aux-outputs model pre-trained on predicting auxiliary information from all unlabeled data.
We present the general In-N-Out algorithm in Algorithm~\ref{alg:innout} and analyze it in the linear multi-task regression setting of Section~\ref{sec:setup}.
The In-N-Out model $\finnout=\ginnout \circ \hout$ optimizes the empirical risk on labeled and pseudolabeled data:
\begin{equation}
    \label{eqn:innout_loss}
       \ginnout = \argmin_g (1 - \lambda) \Rtrans(\hout, g) +  \lambda \Rst(\hout, \fin, g)
\end{equation}
where $\Rst(\hout, \fin, g) = \frac{1}{m_1} \sum_{i=1}^{m_1} \ell( g(\hout(\xidi)), \fin(\xidi, \zidi) )$ is the loss of self-training on pseudolabels from the aux-inputs model,
and $\lambda \in [0, 1]$ is a hyperparameter that trades off between labeled and pseudolabeled losses. In our experiments, we fine-tune $\ginnout$ and $\hout$ together.
\begin{algorithm}[tbp]
	\caption{In-N-Out}	\label{alg:innout}
	\begin{algorithmic}[1]
		\small
				\Require  in-distribution labeled data $\{(x_i, y_i, z_i)\}_{i=1}^n \sim \Pid$, 
		\Statex	\hspace{0.7cm} in-distribution unlabeled data $\{(\xidi, \zidi)\}_{i=1}^{\midsize} \sim \Pid$, 
		\Statex \hspace{0.7cm} OOD unlabeled data $\{(\xoodi, \zoodi)\}_{i=1}^{\moodsize} \sim \Pood$
\State	Learn  $\fin : (x, z) \mapsto y$ from  in-distribution labeled data $\{(x_i, y_i, z_i)\}_{i=1}^n \sim \Pid$\;
\State	Pre-train $\gz \circ \hout : x \mapsto z$ on aux-outputs from all unlabeled data $\{(\xidi, \zidi)\}_{i=1}^{\midsize} \cup \{(\xoodi, \zoodi)\}_{i=1}^{\moodsize}$\;
\State Return $\finnout = \ginnout \circ \hout: x \mapsto y$ trained on labeled and pseudolabeled data  $\{(x_i, y_i)\}_{i=1}^n \cup \{(\xidi, \fin(\xidi, \zidi)\}_{i=1}^{\midsize}$
	\end{algorithmic}
\end{algorithm}

\textbf{Theoretical setup.}
Because fine-tuning is difficult to analyze theoretically, we analyze a slightly modified version of In-N-Out where we train an aux-inputs model to predict $y$ given the features $\hout(x)$ and auxiliary information $z$, so the aux-inputs model $\gin : \BR^k \times \BR^T \to \BR$ is given by $\gin = \argmin_g \frac{1}{n} \sum_{i=1}^n \ell( g(\hout(x_i), z_i), y_i)$.
The population self-training loss on pseudolabels from the aux-inputs model $\gin \circ \hout$ is: $\Rstpop(\hout, \gin, g) = \E_{x, z \sim \Pid}[\ell(g(\hout(x)), \gin(\hout(x), z))]$, and we minimize the self-training loss: $\ginnout = \argmin_g \Rstpop(\hout, \gin, g)$.
At test time given input $x, z$ the In-N-Out model predicts $\ginnout(\hout(x))$.
For the theory, we assume all models ($\gin, \ginnout, and \hout$) are linear.

\subsection{In-N-Out improves over pre-training under arbitrary covariate shift}

We prove that In-N-Out helps on top of pre-training, as long as the auxiliary features give us information about $y$ relative to the noise $\epsilon$ in-distribution---that is, if $\sigmau^2$ is much larger than $\sigma^2$.

To build intuition, first consider the special case where the noise $\sigma^2 = 0$ (equivalently, $\epsilon=0$).
Since $u$ can be recovered from $w$ and $z$, we can write $y$ as a linear function of $w$ and $z$: $y = \dotp{\gammaw}{w} + \dotp{\gammaz}{z}$.
We train an aux-inputs model $\gin$ from $w, z$ to $y$ on finite labeled data.
Since there is no noise, $\gin$ predicts $y$ perfectly from $w, z$ (we learn $\gammaw$ and $\gammaz$).
We use $\gin$ to pseudolabel a large amount of unlabeled data, and since $\gin$ predicts $y$ perfectly from $w, z$, the pseudolabels are perfect.
So here pseudolabeling gives us a much larger and correctly labeled dataset to train the In-N-Out model on.

The technical challenge is proving that self-training helps under arbitrary covariate shift even when the noise is non-zero ($\sigma^2 > 0$), so the aux-inputs model $\gin$ that we learn is accurate but not perfect.
In this case, the pseudolabels have an error which propagates to the In-N-Out model self-trained on these pseudolabels, but we want to show that the error is lower than for the aux-outputs model.
The error in linear regression is proportional to the noise of the target $y$, which for the aux-outputs model is $\sigma^2 + \sigmau^2$.
We show that the In-N-Out model uses the aux-inputs model to reduce the dependence on the noise $\sigmau^2$, because the aux-inputs model uses both $w$ and $z$ to predict $y$.
The proof reduces to showing that the max singular value for the In-N-Out error matrix is less than the min-singular value of the aux-outputs error matrix with high probability.
A core part of the argument is to lower bound the min-singular value of a random matrix (Lemma~\ref{lem:min-sing-lemma}).
This uses techniques from random matrix theory (see e.g., Chapter 2.7 in~\citet{tao2012random}); the high level idea is to show that with probability $1 - \delta$ each column of the random matrix has a (not too small) component orthogonal to all other columns.
 
\newcommand{\selfTrainingHelpsEverywhereText}{
  In the linear setting, for all problem settings $\probsetting$ with $\sigmau^2 > 0$, test distributions $P_x', P_u'$, $n \geq m + d$ number of training points, and $\delta > 0$, there exists $a, b > 0$ such that for all noise distributions $P_{\epsilon}$, with probability at least $1 - \delta$ over the training examples and test example $x' \sim P_x'$, the ratio of the excess risks (for all $\sigma^2$ small enough that $a - b \sigma^2 > 0$) is:
\begin{equation}
\frac{\Rinnoutood - R^*}{\Routood - R^*} \leq \frac{\sigma^2}{a - b \sigma^2}
\end{equation}
Here $R^* = \min_{g^*, h^*} \E_{x', y', z' \sim P'}[ \ell( g^*(h^*(x')), y' ) ]$ is the min. possible (Bayes-optimal) OOD risk, \linebreak
$\Rinnoutood = \E_{y' \sim P_{y'\mid x'}'}[\ell(\ginnout(\hout(x')), y')]$ is the risk of the In-N-Out model on test example $x'$, and $\Routood = \E_{y' \sim P_{y'\mid x'}'}[\ell(\hatgy(\hout(x')), y')]$ is the risk of the aux-outputs model on test example $x'$. Note that $\Rinnoutood$ and $\Routood$ are random variables that depend on the test input $x'$ and the training set $X$.
}

\begin{theorem}
\label{thm:self-train}
\selfTrainingHelpsEverywhereText{}
\end{theorem}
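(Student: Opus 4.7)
The plan is to work in coordinates where $\hout(x) = w = \Bstar x$; this is justified because the proof of Theorem~\ref{thm:auxOutputHelpsEverywhere} shows that pre-training recovers the row space of $\Bstar$, and any in-subspace rotation can be absorbed into the downstream linear heads without changing predictions. In these coordinates $\hatgy$ is the OLS regression of $y$ on $w$, while the modified aux-inputs model $\gin$ is the OLS regression of $y$ on $(w, z)$. Because $T \geq m$ and $\Cstar$ has full column rank, $(w, z) \mapsto (w, u)$ is an invertible linear map, so $\gin$ coincides with the OLS of $y$ on $(w, u)$, and I may write $\gin(w, z) = \hat{\theta}_w^\top w + \hat{\theta}_u^\top u$. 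The advantage of this reparameterization is that the true model $y = \tw^\top w + \tu^\top u + \epsilon$ makes the $(w,u)$-OLS a regression with residual $\epsilon$ of variance $\sigma^2$, whereas the aux-outputs regression has effective residual $\tu^\top u + \epsilon$ of variance $\sigma^2 + \sigmau^2$; this asymmetry drives the entire bound.

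Next I derive a closed form for $\ginnout$. The population self-training objective $\Rstpop(\hout, \gin, g) = \E_{\Pid}[(g(w) - \gin(w, z))^2]$ is minimized over linear $g$ by the $L^2$ projection of $\gin(w, z) = \hat{\theta}_w^\top w + \hat{\theta}_u^\top u$ onto linear functions of $w$. Since $u$ is independent of $x$ and has mean zero, $\E[u \mid w] = 0$, so the projection equals $\hat{\theta}_w^\top w$ and hence $\ginnout(w) = \hat{\theta}_w^\top w$. Using $\E[u' \mid x'] = 0$ OOD together with the Bayes-optimal OOD risk $R^* = \sigma^2 + \E_{P_u'}[(\tu^\top u')^2]$, the two conditional excess risks collapse to $\Rinnoutood - R^* = ((\hat{\theta}_w - \tw)^\top w')^2$ and $\Routood - R^* = ((\hat{\theta}_w^{\text{out}} - \tw)^\top w')^2$, so the ratio reduces to comparing two OLS estimation errors of $\tw$ along the random test direction $w'$.

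For the errors themselves, Frisch--Waugh--Lovell gives $\hat{\theta}_w - \tw = (\tilde W^\top \tilde W)^{-1} \tilde W^\top \epsilon$ where $\tilde W = (I - P_U) W$ is the residual of $W$ after projecting out the column span of $U$. The numerator therefore equals $(\epsilon^\top v_1)^2$ with $v_1 = \tilde W (\tilde W^\top \tilde W)^{-1} w'$, which sub-Gaussian concentration on $\epsilon$ controls by $\sigma^2 \cdot O(\|v_1\|^2 \log(1/\delta))$. On the other side, $\hat{\theta}_w^{\text{out}} - \tw = (W^\top W)^{-1} W^\top (U\tu + \epsilon)$, so the denominator equals $(\tu^\top U^\top v_2 + \epsilon^\top v_2)^2$ with $v_2 = W (W^\top W)^{-1} w'$. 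When $\sigma^2$ is small the $\tu^\top U^\top v_2$ summand dominates, and I need a high-probability lower bound on it.

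The main technical obstacle is this last lower bound, uniform in the noise distribution $P_\epsilon$. Conditioning on $(W, w')$, the scalar $\tu^\top U^\top v_2$ is a weighted sum of i.i.d.\ random variables $\tu^\top u_i$ with weights $(v_2)_i$ and variance $\|v_2\|^2 \, \tu^\top \Sigma_u \tu$, which is strictly positive by $\sigmau^2 > 0$. To turn this into an absolute constant I need $\|v_2\|$ bounded below and $\|v_1\|$ bounded above, and both follow from a min-singular-value lower bound on the random Gram matrices $W^\top W$ and $\tilde W^\top \tilde W$; this is the content of Lemma~\ref{lem:min-sing-lemma}, proved via the column-orthogonalization trick from random matrix theory that the excerpt already flags as the core random-matrix estimate. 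Combining this with an anti-concentration bound on the mean-zero variable $\tu^\top U^\top v_2$ (using the bounded-density assumption on $u$) gives a denominator lower bound of the form $a - O(\sigma^2)$, where the $O(\sigma^2)$ contamination absorbs the $\epsilon^\top v_2$ cross-term. Pairing this with the $\sigma^2$-scale upper bound on the numerator produces the claimed ratio $\sigma^2 / (a - b\sigma^2)$, with $a, b > 0$ depending on $\probsetting$, $P_x'$, $P_u'$, $n$, and $\delta$ but not on $P_\epsilon$, valid whenever $\sigma^2 < a/b$.
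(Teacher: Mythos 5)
Your proposal follows essentially the same route as the paper: identifying $\ginnout$ with the $w$-block of the joint $(w,u)$ regression via $\E[u\mid w]=0$ (your Frisch--Waugh--Lovell step is the paper's Lemma~\ref{lem:innout-alt-estimator-closed-form} in a different but equivalent form), upper-bounding the In-N-Out excess risk by $O(\sigma^2)$, and lower-bounding the aux-outputs excess risk by $a-b\sigma^2$ via anti-concentration of $U\tu$ together with the min-singular-value bound of Lemma~\ref{lem:min-sing-lemma}. The one caveat is that the theorem quantifies over \emph{all} mean-zero, variance-$\sigma^2$ noise distributions $P_{\epsilon}$, so your appeal to sub-Gaussian concentration for the numerator term $(\epsilon^\top v_1)^2$ should be replaced by the Chebyshev/Markov argument the paper uses, which still yields the required $O(\sigma^2)$ bound at constant probability.
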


\begin{remark}
  As $\sigma \to 0$, the excess risk ratio of In-N-Out to Aux-outputs goes to $0$, so the In-N-Out estimator is much better than the aux-outputs estimator.
\end{remark}

The proof of the result is in Appendix~\ref{appendix:proofs}.

\section{Experiments}
\label{sec:experiments}
We show on real-world datasets for land cover and cropland prediction that aux-inputs can hurt OOD performance, while aux-outputs improve OOD performance. In-N-Out improves OOD accuracy and has competitive or better in-distribution accuracy over other models on all datasets (Section~\ref{sec:main-results}).
Secondly, we show that the tradeoff between in-distribution and OOD performance depends on the choice of auxiliary information on CelebA and cropland prediction (Section~\ref{sec:input-choice}).
Finally, we show that OOD unlabeled examples are important for improving OOD robustness (Section~\ref{sec:unlabeled-in-out}).

\subsection{Experimental Setup}
\label{sec:datasets}

\begin{figure}[tbp]
\begin{minipage}{.62\textwidth}
\centering
\includegraphics[width=0.99\textwidth]{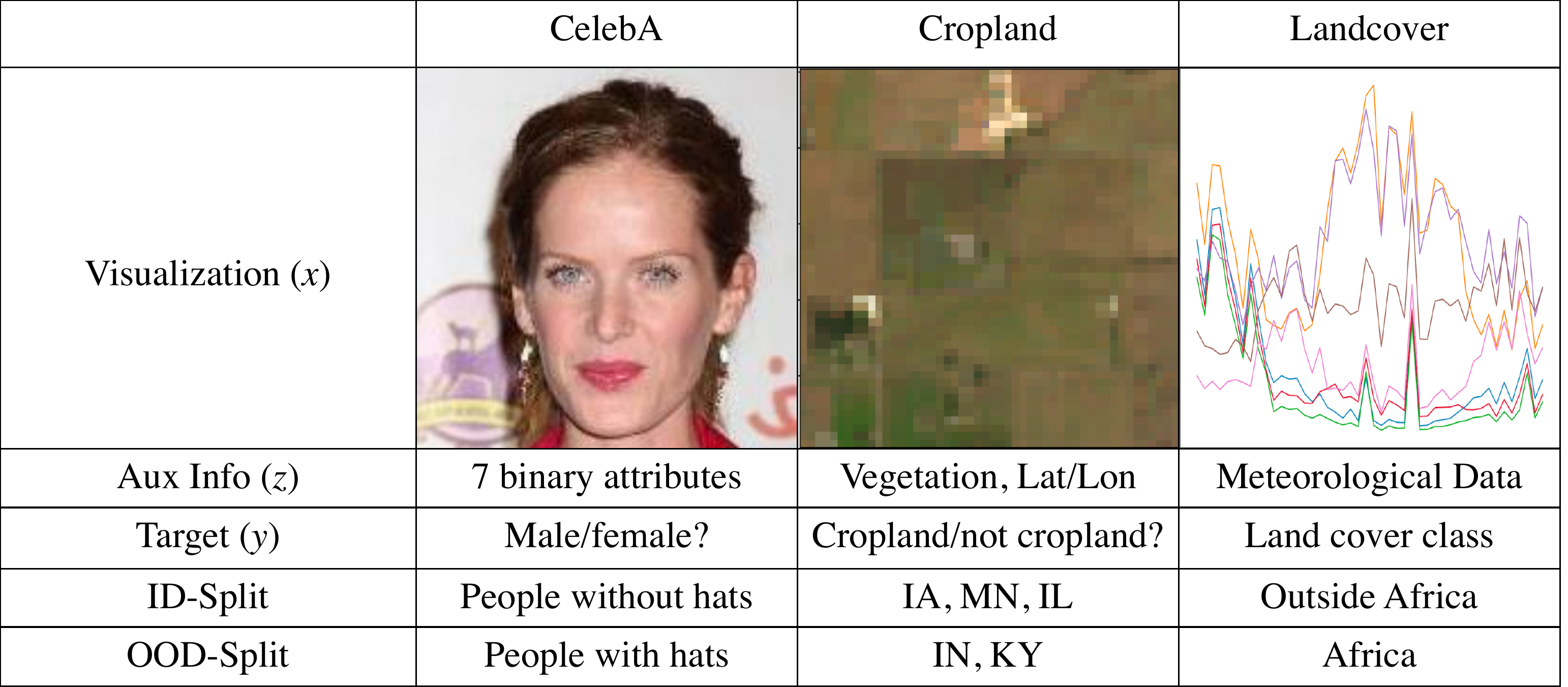}
  \caption{Summary of the datasets used in our experiments.}\label{fig:dataset-table}
\end{minipage}
\hfill
\begin{minipage}{.36\textwidth}
    \centering
  \includegraphics[width=0.8\textwidth]{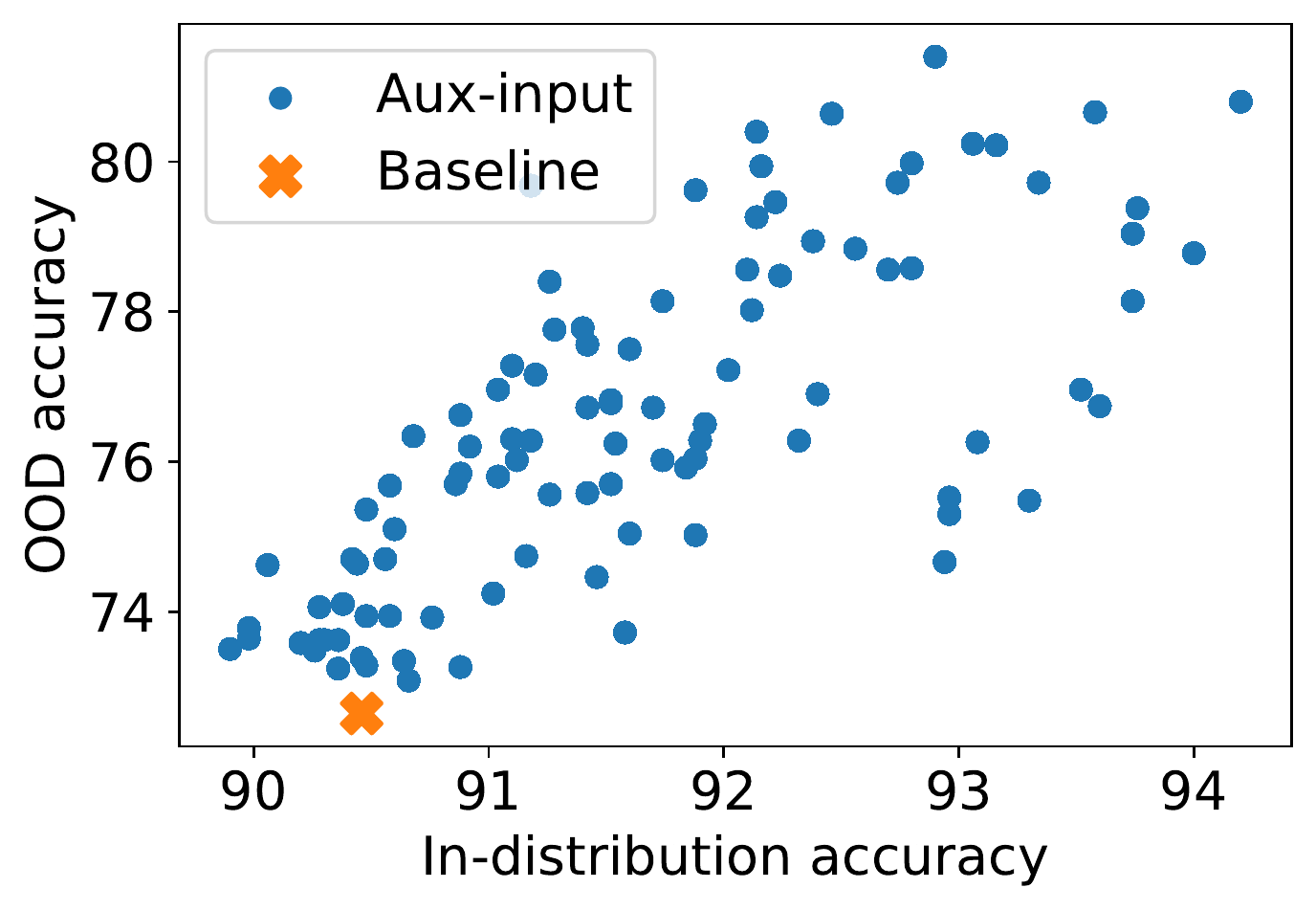}
  \caption{
    Correlation ($r=0.72$) between in-distribution accuracy and OOD accuracy when adding 1 to 15 random auxiliary inputs in CelebA.}\label{fig:celeba-correlation} 
\end{minipage}
\end{figure}

We give a summary of considered datasets and setup here --- see Figure~\ref{fig:dataset-table} and Appendix~\ref{app:experiment-details} for details.
Our datasets use auxiliary information both derived from the input (CelebA, Cropland) and from other sources (Landcover).

\paragraph{CelebA.}
In CelebA~\citep{liu2015deep}, the input $x$ is a RGB image (resized to $64\times 64$), the target $y$ is a binary label for gender, and the auxiliary information $z$ are 7 (of 40) binary-valued attributes derived from the input (e.g., presence of makeup, beard).
We designate the set of images where the celebrity is wearing a hat as OOD.
We use a ResNet18 as the backbone model architecture for all models (see Appendix~\ref{app:celeba} for details).

\paragraph{Cropland.}
Crop type or cropland prediction is an important intermediate problem for crop yield prediction~\citep{cai2018crop, johnson2016crop, kussul2017classification}. The input $x$ is a $50 \times 50$ RGB image taken by a satellite, the target $y$ is a binary label that is 1 when the image contains majority cropland, and the auxiliary information $z$ is the center location coordinate plus $50\times 50$ vegetation-related bands. The vegetation bands in the auxiliary information $z$ is derived from the original satellite image, which contains both RGB and other frequency bands. We use the Cropland dataset from~\citet{wang2020weakly}, with data from the US Midwest. We designate Iowa, Missouri, and Illinois as in-distribution and Indiana and Kentucky as OOD. Following~\citet{wang2020weakly}, we use a U-Net-based model~\citep{ronneberger2015unet}. See Appendix~\ref{app:cropland} for details.

\paragraph{Landcover.}
Land cover prediction involves classifying the land cover type (e.g., “grasslands”) from satellite data at a location~\citep{gislason2006landcover, russwurm2020meta}). The input $x$ is a time series measured by NASA's MODIS satellite~\citep{modis2015landcover}, the target $y$ is one of 6 land cover classes, and the auxiliary information $z$ is climate data (e.g., temperature) from ERA5, a dataset computed from various satellites and weather station data~\citep{dataset2017era5}. We designate non-African locations as in-distribution and Africa as OOD. We use a 1D-CNN to handle the temporal structure in the MODIS data. See Appendix~\ref{app:landcover} for details.

\paragraph{Data splits.} We first split off the OOD data, then split the rest into training, validation, and in-distribution test (see Appendix~\ref{app:experiment-details} for details).
We use a portion of the training set and OOD set as in-distribution and OOD unlabeled data respectively. The rest of the OOD set is held out as test data. We run 5 trials, where we randomly re-generate the training/unlabeled split for each trial (keeping held-out splits fixed).
We use a reduced number of labeled examples from each dataset (1\%, 5\%, 10\% of labeled examples for CelebA, Cropland, and Landcover respectively), with the rest as unlabeled.

\paragraph{Repeated self-training.} In our experiments, we also consider augmenting In-N-Out models with repeated self-training, which has fueled recent improvements in both domain adaptation and ImageNet classification~\citep{shu2018dirtt,xie2020selftraining}. For one additional round of repeated self-training, we use the In-N-Out model to pseudolabel all unlabeled data (both ID and OOD) and also initialize the weights with the In-N-Out model.
Each method is trained with early-stopping and hyperparameters are chosen using the validation set.

\subsection{Main Results}
\label{sec:main-results}
\begin{table}[tbp]
  \centering
\scalebox{0.80}{
\begin{tabular} {l r r r r r r}
\toprule
& \multicolumn{2}{c}{CelebA} & \multicolumn{2}{c}{Cropland} & \multicolumn{2}{c}{Landcover}\\
 & ID Test Acc & OOD Acc & ID Test Acc & OOD Acc & ID Test Acc & OOD Test Acc\\
\midrule
    Baseline & 90.46 $\pm$ 0.85 & 72.64 $\pm$ 1.39 & 94.50 $\pm$ 0.11 & 90.30 $\pm$ 0.75 & 75.92 $\pm$ 0.25 & 58.31 $\pm$ 1.87\\
    Aux-inputs & 92.36 $\pm$ 0.29 & 77.4 $\pm$ 1.33 & \textbf{95.34} $\pm$ 0.22 & 84.15 $\pm$ 4.23 & 76.58 $\pm$ 0.44 & 54.78 $\pm$ 2.01\\
    Aux-outputs & \textbf{94.0} $\pm$ 0.24 & 77.68 $\pm$ 0.59 & 95.12 $\pm$ 0.15 & 91.63 $\pm$ 0.21 & 72.48 $\pm$ 0.37 & 61.03 $\pm$ 0.97 \\
    In-N-Out (no pretrain) & \textbf{93.8} $\pm$ 0.56 & 78.54 $\pm$ 1.31 & 94.93 $\pm$ 0.15 & 91.23 $\pm$ 0.61 & 76.54 $\pm$ 0.23 & 59.19 $\pm$ 0.98 \\
    In-N-Out & 93.42 $\pm$ 0.36 & 79.42 $\pm$ 0.70 & \textbf{95.45} $\pm$ 0.16 & \textbf{91.94} $\pm$ 0.57 & \textbf{77.43} $\pm$ 0.39 & 61.53 $\pm$ 0.74\\
    In-N-Out + repeated ST & \textbf{93.76} $\pm$ 0.46 & \textbf{80.38} $\pm$ 0.68 & \textbf{95.53} $\pm$ 0.19 & \textbf{92.18} $\pm$ 0.40 & \textbf{77.10} $\pm$ 0.30 & \textbf{62.61} $\pm$ 0.58\\
\bottomrule
\end{tabular}
}
\caption{Accuracy (\%) of various models using auxiliary information as input, output, or both. In-N-Out generally improves both in- and out-of-distribution over aux-inputs or aux-outputs alone. Results are averaged over 5 trials with 90\% intervals. Repeated ST refers to one round of repeated self-training on top of In-N-Out.}
\label{table:main}
\end{table}

Table~\ref{table:main} compares the in-distribution (ID) and OOD accuracy of different methods.
In all datasets, pre-training with aux-outputs improves OOD performance over the baseline, and In-N-Out (with or without repeated ST) generally improves both in- and out-of-distribution performance over all other models.

\paragraph{CelebA.}
In CelebA, using auxiliary information either as aux-inputs or outputs improves both ID (2--4\%) and OOD accuracy (5\%).
We hypothesize this is because the auxiliary information is quite robust. Figure~\ref{fig:celeba-correlation} shows that there is a significant correlation ($r=0.72$) between ID and OOD accuracy for 100 different sets of aux-inputs, supporting results on standard datasets~\citep{recht2019doimagenet,xie2020selftraining,santurkar2020breeds}.
In-N-Out achieves the best OOD performance and comparable ID performance even though there is no tradeoff between ID and OOD accuracy.

\paragraph{Remote sensing.}
In the remote sensing datasets, aux-inputs can induce a tradeoff where increasing ID accuracy hurts OOD performance.
In cropland prediction, even with a small geographic shift (US Midwest), the baseline model has a significant drop from ID to OOD accuracy (4\%).
The aux-inputs model improves ID accuracy almost 1\% above the baseline but OOD accuracy drops 6\%.
In land cover prediction, using climate information as aux-inputs decreases OOD accuracy by over 4\% compared to the baseline. The aux-outputs model improves OOD, but decreases ID accuracy by 3\% over the baseline.

\paragraph{Improving in-distribution accuracy over aux-outputs.} One of the main goals of the self-training step in In-N-Out is to improve the in-distribution performance of the aux-outputs model. We compare to oracle models that use a large amount of in-distribution labeled data to compare the gains from In-N-Out. In Landcover, the oracle model which uses 160k labeled ID examples gets 80.5\% accuracy. In-N-Out uses 16k labeled examples and 150k unlabeled ID examples (with 50k unlabeled OOD examples) and improves the ID accuracy of aux-output from 72.5\% to 77.4\%, closing most (62\%) of the gap. In Cropland, the oracle model achieves 95.6\% accuracy. Here, In-N-Out closes 80\% of the gap between aux-outputs and the oracle, improving ID accuracy from 95.1\% to 95.5\%.

\paragraph{Ablations with only pre-training or self-training.}
We analyze the individual contributions of self-training and pre-training in In-N-Out.
On both cropland and land cover prediction, In-N-Out outperforms standard self-training on pseudolabels from the aux-inputs model (In-N-Out without pre-training), especially on OOD performance, where In-N-Out improves by about 1\% and 2\% respectively.
Similarly, In-N-Out improves upon pre-training (aux-outputs model) both ID and OOD for both datasets.

\subsection{Choice of auxiliary inputs matters}
\label{sec:input-choice}

\begin{figure}[tbp]
\begin{minipage}{.49\textwidth}
  \centering
  \subfloat{\includegraphics[width=0.5\textwidth]{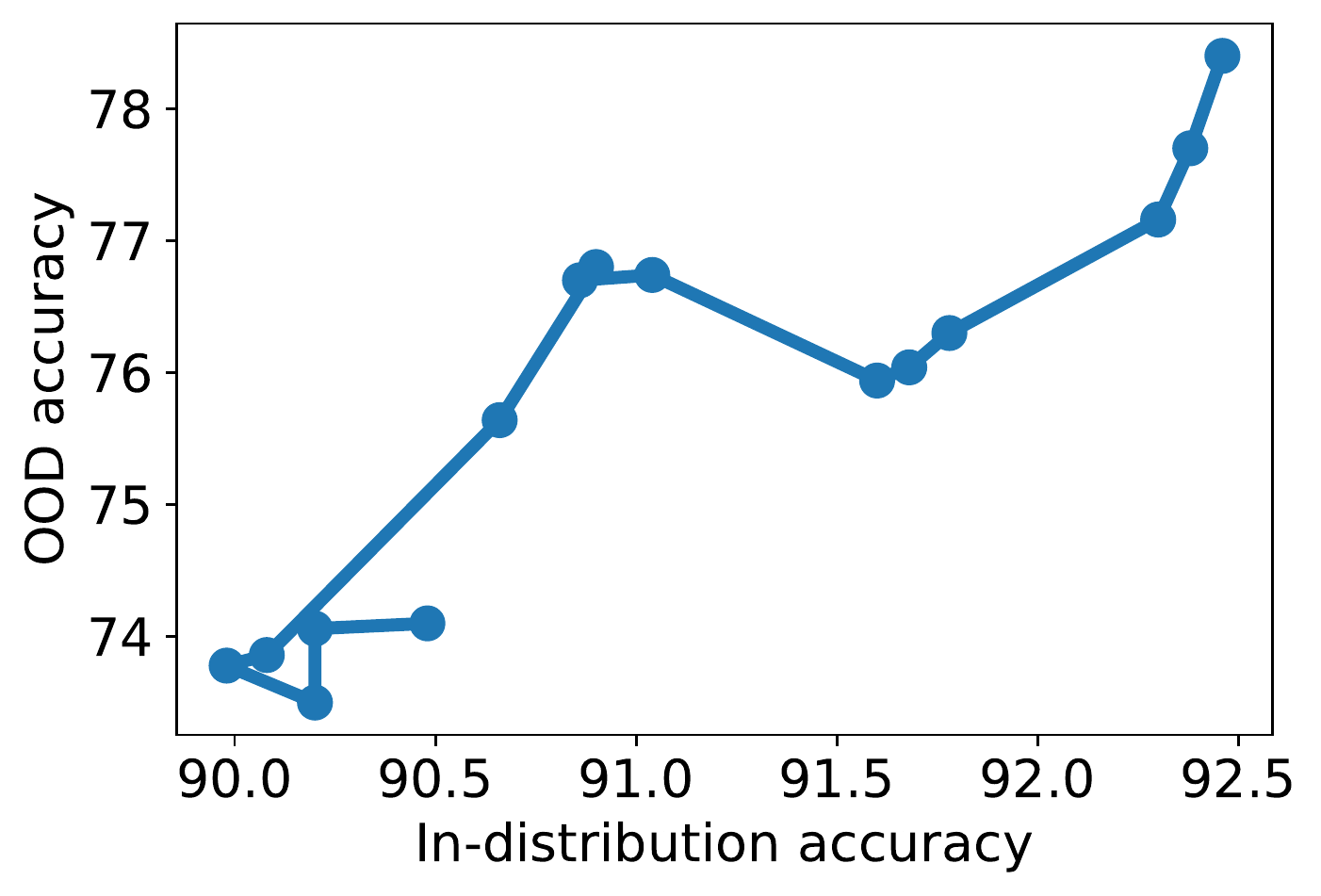}}
  \caption{In-distribution vs. OOD accuracy on CelebA when sequentially adding a random set of 15 auxiliary inputs one-by-one. Even if adding all 15 auxiliary inputs improves both in-distribution and OOD accuracy, some intermediate in-distribution gains can hurt OOD.}\label{fig:celeba-chain}
\end{minipage}
\hfill
\begin{minipage}{.49\textwidth}
  \centering
\scalebox{0.71}{
\begin{tabular} {l r r r}
\toprule
 & ID Test Acc & OOD Test Acc \\
\midrule
Only in-distribution & 69.73 $\pm$ 0.51 & 57.73 $\pm$ 1.58\\
Only OOD & 69.92 $\pm$ 0.41 & \textbf{59.28} $\pm$ 1.01\\
Both & 70.07 $\pm$ 0.46  & \textbf{59.84} $\pm$ 0.98\\
\bottomrule
\end{tabular}
}
\captionof{table}{Ablation study on the use of in-distribution vs. OOD unlabeled data in pre-training models on Landcover, where unlabeled sample size is standardized (much smaller than Table~\ref{table:main}). Using OOD unlabeled examples are important for gains in OOD accuracy (\%). Results are shown with 90\% error intervals over 5 trials.
}
\label{table:unlabeled-in-out}
\end{minipage}
\end{figure}

We find that the choice of auxiliary inputs affects the tradeoff between ID and OOD performance significantly, and thus is important to consider for problems with distribution shift.
While Figure~\ref{fig:celeba-correlation} shows that auxiliary inputs tend to simultaneously improve ID and OOD accuracy in CelebA, our theory suggests that in the worst case, there should be auxiliary inputs that worsen OOD accuracy.
Indeed, Figure~\ref{fig:celeba-chain} shows that when taking a random set of 15 auxiliary inputs and adding them sequentially as auxiliary inputs, there are instances where an extra auxiliary input improves in-distribution but hurts OOD accuracy even if adding all 15 auxiliary inputs improves both ID and OOD accuracy.
In cropland prediction, we compare using location coordinates and vegetation data as auxiliary inputs with only using vegetation data. The model with locations achieves the best ID performance, improving almost 1\% in-distribution over the baseline with only RGB. Without locations (only vegetation data), the ID accuracy is similar to the baseline but the OOD accuracy improves by 1.5\%. In this problem, location coordinates help with in-distribution interpolation, but the model fails to extrapolate to new locations.

\subsection{OOD unlabeled data is important for pre-training}
\label{sec:unlabeled-in-out}

We compare the role of in-distribution vs. OOD unlabeled data in pre-training.
Table~\ref{table:unlabeled-in-out} shows the results of using only in-distribution vs. only OOD vs. a balanced mix of unlabeled examples for pre-training on the Landcover dataset, where unlabeled sample size is standardized across the models (by reducing to the size of the smallest set, resulting in 4x less unlabeled data).
Using only in-distribution unlabeled examples does not improve OOD accuracy, while having only OOD unlabeled examples does well both in-distribution and OOD since it also has access to the labeled in-distribution data.
For the same experiment in cropland prediction, the differences were not statistically significant, perhaps due to the smaller geographic shift (across states in cropland vs. continents in landcover).

\section{Related work}
\label{sec:related-work}

\paragraph{Multi-task learning and weak supervision.}
\citet{caruana2003outputs} proposed using noisy features (aux-outputs) as a multi-task output, but do not theoretically analyze this approach.
\citet{wu2020multitask} also study multi-task linear regression.
However, their auxiliary tasks must have true parameters that are closely aligned (small cosine distance) to the target task. Similarly, weak supervision works assume access to weak labels correlated with the true label~\citep{ratner2016data,ratner2017snorkel}. In our paper, we make no assumptions about the alignment of the auxiliary and target tasks beyond a shared latent variable while also considering distribution shifts.

\paragraph{Transfer learning, pre-training, and self-supervision.}
We support empirical works that show the success of transfer learning and pre-training in vision and NLP~\citep{krizhevsky2012imagenet,simonyan2015verydeep,devlin2019bert}.
Theoretically, \citet{du2020fewshot,tripuraneni2020multitask} study pre-training in a similar linear regression setup.
They show in-distribution generalization bound improvements, but do not consider OOD robustness or combining with auxiliary inputs.
\citet{hendrycks2019selfsupervised} shows empirically that self-supervision can improve robustness to synthetic corruptions. We support these results by showing theoretical and empirical robustness benefits for pre-training on auxiliary information, which can be derived from the original input as in self-supervision.

\paragraph{Self-training for robustness.}
\citet{raghunathan2020understanding} analyze robust self-training (RST)~\citep{carmon2019unlabeled,najafi2019robustness,uesato2019are}, which improves the tradeoff between standard and adversarially robust accuracy, in min-norm linear regression. \citet{khani2021removing} show how to use RST to make a model robust against a predefined spurious feature without losing accuracy.
While related, we work in multi-task linear regression, study pre-training, and prove robustness to \emph{arbitrary} covariate shifts.
\citet{kumar2020gradual} show that repeated self-training on gradually shifting unlabeled data can enable adaptation over time. In-N-Out is complementary and may provide better pseudolabels in each step of this method.
\citet{chen2020selftraining} show that self-training can remove spurious features for Gaussian input features in linear models, whereas our results hold for general input distributions (with density).
\citet{zoph2020rethinking} show that self-training and pre-training combine for in-distribution gains. We provide theory to support this and also show benefits for OOD robustness.

\paragraph{Domain adaptation.}
Domain adaptation works account for covariate shift by using unlabeled data from a target domain to adapt the model~\citep{blitzer2007adaptation,daume07easyadapt,shu2018dirtt,hoffman2018cycada,ganin2016domain}. Often, modern domain adaptation methods~\citep{shu2018dirtt,hoffman2018cycada} have a self-training or entropy minimization component that benefits from having a better model in the target domain to begin with. Similarly, domain adversarial methods~\citep{ganin2016domain} rely on the inductive bias of the source-only model to correctly align the source and target distributions. In-N-Out may provide a better starting point for these domain adaptation methods.

\section{Discussion}
\label{sec:discussion}

\paragraph{Using spurious features for robustness.}
Counterintuitively, In-N-Out uses potentially spurious features (the auxiliary information, which helps in-distribution but hurts OOD accuracy) to improve OOD robustness.
This is in contrast to works on removing spurious features from the model~\citep{arjovsky2019invariant,ilyas2019adversarial,chen2020selftraining}.
In-N-Out promotes utilizing all available information by leveraging spurious features as useful in-distribution prediction signals rather than throwing them away.

\paragraph{General robustness with unlabeled data.}
In-N-Out is an instantiation of a widely applicable paradigm for robustness: collect unlabeled data in all parts of the input space and learn better representations from the unlabeled data before training on labeled data. This paradigm has driven large progress in few-shot generalization in vision~\citep{hendrycks2019pretraining,hendrycks2019selfsupervised} and NLP~\citep{devlin2019bert,brown2020gpt3}. In-N-Out enriches this paradigm by proposing that some features of the collected data can be used as input and output simultaneously, which results in robustness to arbitrary distribution shifts.

\paragraph{Leveraging metadata and unused features in applications.}
Many applications have inputs indexed by metadata such as location coordinates or timestamps~\citep{christie2018fmow,yeh2020poverty,ni2019justifying}.
 We can use such metadata to join (in a database sense) other auxilary data sources on this metadata for use in In-N-Out. This auxiliary information may often be overlooked or discarded, but In-N-Out provides a way to incorporate them to improve both in- and out-of-distribution accuracy.

\paragraph{Division between input features and auxiliary information.}
While a standard division between inputs and auxiliary information may exist in some domains, In-N-Out applies for any division of the input.
An important further question is how to automatically choose this division under distribution shifts.

\section{Conclusion}
\label{sec:conclusion}
We show that while auxiliary information as inputs improve in-distribution and OOD on standard curated datasets, they can hurt OOD in real-world datasets.
In contrast, we show that using auxiliary information as outputs by pretraining improves OOD performance. In-N-Out combines the strengths of auxiliary inputs and outputs for further improvements both in- and out-of-distribution.

\section{Acknowledgements}
We thank Sherrie Wang and Andreas Schlueter for their help in procuring remote sensing data, Daniel Levy for his insight in simplifying the proof of Theorem~\ref{thm:auxOutputHelpsEverywhere}, Albert Gu for a key insight in proving Lemma~\ref{lem:min-sing-lemma} using tools from random matrix theory, as well as Shyamal Buch, Pang Wei Koh, Shiori Sagawa, and anonymous reviewers for their valuable help and comments. This work was supported by an Open Philanthropy Project Award, an NSF Frontier Award as part of the Center for Trustworthy Machine Learning (CTML). SMX was supported by an NDSEG Fellowship. AK was supported by a Stanford Graduate Fellowship. TM was partially supported by the Google Faculty Award, JD.com, Stanford Data Science Initiative, and the Stanford Artificial Intelligence Laboratory.

\section{Reproducibility}
All code, data, and experiments are on CodaLab at \href{https://worksheets.codalab.org/worksheets/0x2613c72d4f3f4fbb94e0a32c17ce5fb0}{\color{blue}this link.}

\bibliographystyle{plainnat}
\bibliography{main}
\newpage
\appendix
\section{Proof for Sections~\ref{sec:input-output-theory} and~\ref{sec:in-n-out}}
\label{appendix:proofs}

Our theoretical setting assumes all the model families are linear.
We begin by specializing the setup in Section~\ref{sec:setup} and defining all the necessary matrices.
A word on notation: if unspecified, expectations are taken over all random variables.

\textbf{Data matrices}:
We have finite labeled data in-distribution: $n \geq d + m$ input examples $X \in \mathbb{R}^{n \times d}$, where each row $X_i \sim P_x$ is an example sampled independently.
We have an \emph{unobserved} latent matrix: $U \in \R^{n \times m}$ where each row $U_i \sim P_u$ is sampled independently from other rows and from $X$.
$U$ is unobserved and not directly used by any of the models, but we will reference $U$ in our analysis.
As stated in the main paper, we assume that $\E_{u \sim P_u}[u] = 0$.
We have labels $Y \in \mathbb{R}^{n}$ and auxiliary data $Z \in \mathbb{R}^{n \times T}$, where each row $Y_i, Z_i$ is sampled jointly given input example $X_i, U_i$, that is: $Y_i, Z_i \sim P_{y, z \mid X_i, U_i}$.
In our linear setting, we have $Z = X \Bstar^\top \Astar^\top + U \Cstar^\top$ and $Y = X \Bstar^\top \tw + U \tu + \epsilon$, where $\epsilon \in \R^n$ with each entry $\epsilon_i \sim P_{\epsilon}$ sampled independently from a mean $0$, variance $\sigma^2$ distribution $P_{\epsilon}$.

\textbf{Reminder on shapes}:
As a reminder, $\Bstar \in \R^{k \times d}$ maps the input $x \in \R^d$ to a low dimensional representation $w \in \R^k$ via $w = \Bstar x$.
$\Astar \in \R^{T \times k}, \Cstar \in \R^{T \times m}$ generate auxiliary $z \in \R^T$ via: $z = \Astar w + \Cstar u$.
Finally, $y \in \R$ is given by: $y = \dotp{\tw}{w} + \dotp{\tu}{u} + \epsilon$, where $\tw \in \R^k, \tu \in \R^m$.
Letting $\tx = \Bstar^\top \tw$, we equivalently have $y = \dotp{\tw}{x} + \dotp{\tu}{u} + \epsilon$ in terms of $x, u$.

\begin{figure}[tbp]
  \centering
  \includegraphics[width=0.4\textwidth]{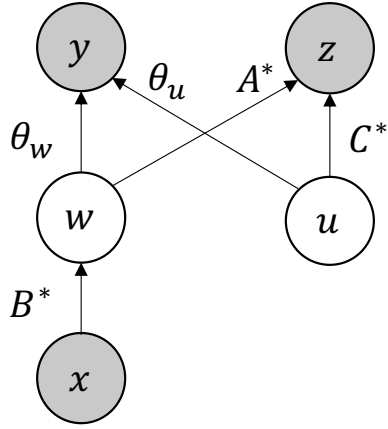}
  \caption{Graphical model for our theoretical setting, where auxiliary information $z$ is related to targets $y$ through the latent variable $w$ and latent noise $u$.}\label{fig:graphical-model-edges}
\end{figure}

\subsection{Models and evaluation}

\textbf{Baseline}: ordinary least squares estimator that uses $x$ only, so $\txbaseline = \argmin_{\theta'} \| Y - X \theta' \|_2$.
Given a test example $x, z$, the baseline method predicts $\fbs(x, z) = \dotp{\txbaseline}{x}$, ignoring $z$.
In closed form, $\txbaseline = (X^\top X)^\minv X^\top Y$.

\textbf{Aux-inputs}: least squares estimator using $x$ and auxiliary $z$ as input: $\txinput, \tzinput = \argmin_{\tx', \tz'} \| Y - (X \tx' + Z \tz')\|_2$.
The aux-inputs method predicts $\dotp{\txinput}{x} + \dotp{\tzinput}{z}$ for a test example $x, z$.
In closed form, letting $\Xz = [X; Z]$, where we append the columns so that $\Xz \in \mathbb{R}^{n \times (d + T)}$, $[\txinput, \tzinput] = ({\Xz}^\top {\Xz})^\minv {\Xz}^\top Y$.

\textbf{Aux-outputs}: pretrains on predicting $z$ from $x$ on unlabeled data to learn a mapping from $x$ to $w$, then learns a regression model on top of this latent embedding $w$. In the \emph{pre-training} step: use unlabeled data to learn the feature-space embedding $\hatB$:
\begin{equation}
\hatA, \hatB = \argmin_{A, B} \E_{x \sim P_x}[\| ABx - z \|_2^2] \quad A \in \mathbb{R}^{T \times k}, B \in \mathbb{R}^{k \times d}.
\end{equation}
The \emph{transfer} step solves a lower dimensional regression problem from $w$ to $y$: $\twoutput = \argmin_{\tw'} \| Y - X\hatB^{\top} \tw' \|_2$.
Given a test example $x$, the aux-outputs model predicts $\dotp{\twoutput}{\hatB x}$.

\textbf{In-N-Out}:
First learn an output model $\hatA, \hatB$, and let $W = X \hatB^{\top}$ be the feature matrix.
Next, train an input model on the feature space $w$.
\begin{equation}
\hat{\gammaw}, \hat{\gammaz} = \argmin_{\hat{\gammaw}, \hat{\gammaz}} \|Y - (W \hat{\gammaw} + Z \hat{\gammaz})\|_2.
\end{equation}
Note that this is slightly different from our experiments where we trained the aux-inputs model directly on the inputs $x$.
We now use the input model to pseudolabel our in-domain unlabeled examples, and self-train a model \emph{without z} on these pseudolabels.
Given each point $w, z$, we produce a pseudolabel $\dotp{\hat{\gammaw}}{w} + \dotp{\hat{\gammaz}}{z}$.
We now learn a least squares estimator from $w$ to the pseudolabels which gives us the In-N-Out estimator $\twinnout$:
\begin{equation}
\label{eqn:innoutLinearDfn}
\twinnout = \argmin_{\twinnout} \E_{w, z \sim \Pid} \Big[ \big(\dotp{\twinnout}{w} - (\dotp{\hat{\gammaw}}{w} + \dotp{\hat{\gammaz}}{z}) \big)^2 \Big]
\end{equation}
Given a test example $x$, In-N-Out predicts $\dotp{\twinnout}{\hatB x}$.

\subsection{Auxiliary inputs help in-distribution}

The proof of Proposition~\ref{prop:auxInputInDomain} is fairly standard.
We first give a brief sketch, specify the additional regularity conditions, and then give the proof.
We lower bound the risk of the baseline by $\sigmau^2 + \sigma^2$ since this is the Bayes-opt risk of using only $x$ but not $z$ to predict $y$.
We upper bound the risk of the aux-inputs model which uses $x, z$ to predict $y$, which is the same as upper bounding the risk in random design linear regression.
For this upper bound we use Theorem 1 in~\citet{hsu2012random} (note that there are multiple versions of this paper, and we specifically use the Arxiv version available at \url{https://arxiv.org/abs/1106.2363}).
As such, we inherit their regularity conditions.
In particular, we assume:
\begin{enumerate}
\item $x, u$ are upper bounded almost surely. This is a technical condition, and can be replaced with sub-Gaussian tail assumptions~\citep{hsu2012random}.
\item The noise $\epsilon$ is sub-Gaussian with variance parameter $\sigma^2$.
\item The latent dimension $m$ and auxiliary dimension $T$ are equal so that the inputs to the aux-inputs model have invertible covariance matrix.\footnote{$x$ and $u$ are independent, with invertible covariance matrices, and $z = \Astar \Bstar x + \Cstar u$ where $\Cstar$ is full rank, so by block Gaussian elimination we can see that $[x, z]$ has invertible covariance matrix as well.}
\end{enumerate}

\newtheorem*{auxInputInDomainProposition}{Restatement of Proposition~\ref{prop:auxInputInDomain}}

\begin{auxInputInDomainProposition}
\auxInputInDomainText{}
\end{auxInputInDomainProposition}

\begin{proof}
\textbf{Lower bound risk of baseline}: First, we lower bound the expected risk of the baseline by $\sigmau^2 + \sigma^2$.
Intuitively, this is the irreducible error---no linear classifier using only $x$ can get better risk than $\sigmau^2 + \sigma^2$ because of intrinsic noise in the output $y$.
Let $\tx = \Bstar^{\top} \tw$ be the optimal baseline parameters.
We have
\begin{align}
\Rid(\fbs) &= \E_{x, y, z \sim \Pid}[(y - \dotp{\txbaseline}{x})^2] \\
&= \E_{x, u, \epsilon \sim \Pid}[((\dotp{\tx}{x} + \dotp{\tu}{u} + \epsilon) - \dotp{\txbaseline}{x})^2] \\
&= \E_{x \sim \Pid}[(\dotp{\tx}{x} - \dotp{\txbaseline}{x})^2] + \E_{u \sim \Pid}[\dotp{\tu}{u}^2] + \E_{\epsilon \sim \Pid}[\epsilon^2] \label{eqn:decomp_baseline} \\
&\geq \E_{u \sim \Pid}[\dotp{\tu}{u}^2] + \E_{\epsilon \sim \Pid}[\epsilon^2]. \\
&= \sigmau^2 + \sigma^2.
\end{align}
To get Equation~\ref{eqn:decomp_baseline}, we expand the square, use linearity of expectation, and use the fact that $x, u, \epsilon$ are independent where $u, \epsilon$ are mean 0.

\textbf{Upper bound risk of aux-inputs}: On the other hand, we will show that if $n$ is sufficiently large, the expected risk of the input model is less than $\sigmau^2 + \sigma^2$.

First we show that we can write $y = \dotp{\tx'}{x} + \dotp{\tz'}{z} + \epsilon$ for some $\tx', \tz'$, that is $y$ is a well-specified linear function of $x$ and $z$ plus some noise. Intuitively this is because $y$ is a linear function of $x, u$ and since $\Cstar$ is invertible we can extract $u$ from $x, z$. Formally, we assumed the true model is linear, that is, $y = \dotp{\tx}{x} + \dotp{\tu}{u} + \epsilon$. Since we have $z = \Astar \Bstar x + \Cstar u$ where $\Cstar$ is invertible, we can write $u = \Cstar^{-1} (z - \Astar \Bstar x)$. This gives us

\begin{align}
y &= \dotp{\tx}{x} + \dotp{\tu}{u} + \epsilon \\
&= \dotp{\tx}{x} + \dotp{\tu}{\Cstar^{-1} (z - \Astar \Bstar x)} + \epsilon \\
&=  \dotp{(\tx - \Bstar^\top \Astar^\top (\Cstar^{\top})^{-1} \tu)}{x} + \dotp{(\Cstar^{\top})^{-1}\tu}{z} + \epsilon.
\end{align}

So setting $\tx' = \tx - \Bstar^\top \Astar^\top (\Cstar^{\top})^{-1} \tu$ and $\tz' = {\Cstar^\top}^{-1}\tu$, we get $y = \dotp{\tx'}{x} + \dotp{\tz'}{z} + \epsilon$.

As before, we note that the total mean squared error can be decomposed into the Bayes-opt error plus the excess error:
\begin{align}
\Rid(\fin) &= \E_{x, y, z \sim \Pid}[(y - \dotp{\txinput}{x} - \dotp{\tzinput}{z})^2] \\
&= \E_{x, z, \epsilon \sim \Pid}[((\dotp{\tx'}{x} + \dotp{\tz'}{z} + \epsilon) - \dotp{\txinput}{x} - \dotp{\tzinput}{z})^2] \\
&= \E_{x, z \sim \Pid}[(\dotp{\tx'}{x} + \dotp{\tz'}{z} - \dotp{\txinput}{x} - \dotp{\tzinput}{z})^2] + \E_{\epsilon \sim \Pid}[\epsilon^2] \label{eqn:decomp_input} \\
&= \E_{x, z \sim \Pid}[(\dotp{\tx'}{x} + \dotp{\tz'}{z} - \dotp{\txinput}{x} - \dotp{\tzinput}{z})^2] + \sigma^2. \label{eqn:final_decomp_input}
\end{align}
To get Equation~\ref{eqn:decomp_input}, we expand the square, use linearity of expectation, and use the fact that $x, z, \epsilon$ are independent with $\E[\epsilon] = 0$.
So it suffices to bound the excess error, defined as:

\begin{equation}
\EE := \E_{x, z \sim \Pid}[(\dotp{\tx'}{x} + \dotp{\tz'}{z} - \dotp{\txinput}{x} - \dotp{\tzinput}{z})^2]
\end{equation}

To bound the excess error, we use Theorem 1 in~\citet{hsu2012random} where the inputs/covariates are $[x, z]$.
\ak{honestly we only need consistency of the estimator for this (weak) result, but using Theorem 1 we can get explicit dependencies---I'm lazy though}
$\E[[x, z] [x, z]^{\top}]$ is invertible because $m = T$, $\Cstar$ is full rank, and $P_x, P_u$ have density everywhere with density upper bounded so the variance in any direction is positive, and so the population covariance matrix is positive definite.
This means $\E[[x, z] [x, z]^{\top}]$ has min singular value lower bounded, and we also have that $x$, $z$ are bounded random variables.
Therefore, Condition 1 is satisfied for some finite $\rho_0$. 
Condition 2 is satisfied since the noise $\epsilon$ is sub-Gaussian with mean $0$ and variance parameter $\sigma^2$.
Condition 3 is satisfied with $b_0 = 0$, since we are working in the setting of well-specified linear regression.

To apply Theorem 1~\citep{hsu2012random}, we first choose $t = \log{\frac{3}{\delta}}$ so that $1 - 3e^{-t} \geq 1 - \delta$, and so the statement of the Theorem holds with probability at least $1 - \delta$. Since our true model is linear (or as Remark 9 says that ``the linear model is correct''), $\text{approx}(x) = 0$.

So as per remark 9~\citep{hsu2012random} Equation 11, for some constant $c'$, we have an upper bound on the excess error $\EE$ with probability at least $1 - \delta$:
\begin{equation}
\EE \leq \frac{\sigma^2(d + 2 \sqrt{dt} + 2t)}{n} + o(1/n).
\end{equation}
Note that the notation in~\citet{hsu2012random} is different. The learned estimator in ordinary least squares regression is denoted by $\hat{\beta_0}$, the ground truth parameters by $\beta$, and the excess error is denoted by $\| \hat{\beta_0} - \beta \|_{\Sigma}$. See section 2.1, 2.2 of~\citet{hsu2012random} for more details.

Since $t$ is fixed, there exists some constant $c$ (dependent on $\delta$) such that for large enough $N_1$ if $n \geq N_1$:
\begin{equation}
\label{eqn:excess-error-bound-aux-inputs}
\EE \leq \sigma^2(cd/n).
\end{equation}
Note that this is precisely Remark 10~\citep{hsu2012random}. Remark 10 says that $\| \hat{\beta_0} - \bar{\beta_0} \|_{\Sigma}$ is within constant factors of $\sigma^2 d / n$ for large enough $n$. This is the variance term, but the bias term is $0$ since the linear model is well-specified so $\text{approx}(x) = 0$. As in Propostion 2~\citep{hsu2012random} the total excess error is bounded by 2 times the sum of the bias and variance term, which gives us the same result.

Putting this (Equation~\ref{eqn:excess-error-bound-aux-inputs}) back into Equation~\ref{eqn:final_decomp_input}, we get that with probability at least $1 - \delta$:
\begin{equation}
\Rid(\fin) \leq \sigma^2(1 + cd/n).
\end{equation}

Since $\sigmau^2 > 0$, we have $\sigma^2 < \sigmau^2 + \sigma^2$.
Then for some $N$ and for all $n \geq N$, we have
\begin{equation}
\Rid(\fin) < \sigmau^2 + \sigma^2 \leq \Rid(\fbs).
\end{equation}
In particular, we can choose $N = \max(N_1, c \frac{\sigma^2}{\sigmau^2} d + 1)$, which completes the proof.
\end{proof}

\subsection{Auxiliary inputs can hurt out-of-distribution}

\newtheorem*{auxInputOODExample}{Restatement of Example~\ref{ex:input-model-bad}}

\begin{auxInputOODExample}
\auxInputOODExampleText{}
\end{auxInputOODExample}

\begin{proof}
We will have $x \in \mathbb{R}$ (so $d = 1$), $w = x$, and $u, z \in \mathbb{R}^2$.
We set $z_1 = u_1 + w$ and $z_2 = u_2$, in other words we choose $\Astar = [1, 0]$ and $\Cstar = I_2$ is the identity matrix.
We set $y = x + u_1 + \epsilon$, with $\epsilon \sim N(0, \sigma^2)$, so $y$ is a function of $x$ and $u_1$ but not $u_2$.
In other words we choose $\tw = 1$ and $\tu = (1, 0)$.
$P_x$ will be $\mbox{Uniform}[-1, 1]$, and $P_u$ will be uniform in the unit ball in $\mathbb{R}^2$.

Let $\Xz = [X; Z]$, which denotes appending $X$ and $Z$ by columns so $\Xz \in \mathbb{R}^{n \times 3}$ with $n \geq 3$.
Since $P_x$ and $P_u$ have density, $\Xz$ has rank 3 almost surely.
This means that ${\Xz}^{\top} \Xz$ is invertible (and positive semi-definite) almost surely.
Since $P_x$ and $P_u$ are bounded, the maximum eigenvalue $\eigenmax'$ of ${\Xz}^{\top} \Xz$ is bounded above.
The minimum eigenvalue $\eigenmin$ of $({\Xz}^{\top} \Xz)^{-1}$ is precisely $1/\eigenmax'$ and is therefore positive and bounded below by some $c > 0$ almost surely.

We will define $P_x'$ and $P_u'$ soon.
For now, consider a new test example $x' \sim P_x', u' \sim P_u'$ with $z' = [x', 0] + u'$ and $y' = x' + u_1' + \epsilon'$ with $\epsilon' \sim N(0, \sigma^2)$ and $\E[u'] = 0$.
For the input model we have:
\begin{align}
\E[\Rood(\fin)]
&= \E[(y' - (\dotp{\txinput}{x'} + \dotp{\tzinput}{z'}))^2] \\
&= \sigma^2(1 + \E[(x', z')^{\top} ({\Xz}^{\top} \Xz)^{-1} (x', z')]) \\
&\geq \sigma^2(1  + \E[\eigenmin \| (x', z') \|_2^2]) \\
&\geq \sigma^2(1  + c \; \E[\| (x', z') \|_2^2]) \\
&\geq \sigma^2(1 + c \; \E[{z_2'}^2]) \\
&= \sigma^2(1 + c \; \E[{u_2'}^2])
\end{align}
Notice that this lower bound is a function of $\E[{u_2'}^2]$ which we will make very large. 

On the other hand, letting ${\sigmau'}^2 = \E_{u' \sim P_u'}[(\dotp{\tu}{u'})^2] = \E_{u' \sim P_u'}[{u_1'}^2]$, for the baseline model we have
\begin{align}
\E[\Rood(\fbs)] &= \E[(y' - \dotp{\txbaseline}{x'})^2] \\
&= \E[((\dotp{\tx}{x'} + \dotp{\tu}{u'} + \epsilon') - \dotp{\txbaseline}{x'})^2] \\
&= \E[(\dotp{\tu}{u'} + \epsilon')^2] + \E[((\dotp{\tx}{x'} - \dotp{\txbaseline}{x'})^2] \\
&= \E[(\dotp{\tu}{u'})^2] + \E[\epsilon'^2] + \E[(\dotp{\tx}{x'} - \dotp{\txbaseline}{x'})^2] \\
&= {\sigmau'}^2 + \sigma^2 + \E[(\dotp{\tx}{x'} - \dotp{\txbaseline}{x'})^2] \\
&= {\sigmau'}^2 + \sigma^2 + \E[x'^\top (\tx - \txbaseline) (\tx - \txbaseline)^\top x'] \label{eqn:txTxBaselineError} \\
&= {\sigmau'}^2 + \sigma^2 + (\sigma^2 + {\sigmau}^2)\E[{x'}^{\top} (X^{\top} X)^{-1} x']
\end{align}
where in Equation~\ref{eqn:txTxBaselineError}, we use the fact that $\tx - \txbaseline = (X^\top X)^{-1} X^\top (U \tu + \epsilon)$ to get the next line.
So the risk depends on $x'$ and $\E[{u_1'}^2]$ but not $\E[{u_2'}^2]$.

So we choose $P_x' = \mbox{Uniform}(-1, 1)$.
For $P_u'$, we sample the components independently, with $u_1' \sim \mbox{Uniform}(-1, 1)$, and $u_2' \sim \mbox{Uniform}(-R, R)$.
By choosing $R$ large enough, we can make the lower bound for the input model arbitrarily large without impacting the risk of the baseline model which gives us
\begin{equation}
\E[\Rood(\fin)] > \E[\Rood(\fbs)].
\end{equation}

\end{proof}

\subsection{Pre-training improves risk under arbitrary covariate shift}

\newtheorem*{auxOutputHelpsEverywhereTheorem}{Restatement of Theorem~\ref{thm:auxOutputHelpsEverywhere}}

\begin{auxOutputHelpsEverywhereTheorem}
\auxOutputHelpsEverywhereText{}
\end{auxOutputHelpsEverywhereTheorem}

First we show that pre-training (training a low-rank linear map from $x$ to $z$) recovers the unobserved features $w$.
We will then show that learning a regression map from $w$ to $y$ is better in all directions than learning a regression map from $x$ to $y$.

Our first lemma shows that we can recover the map from $x$ to $w$ up to identifiability (i.e., we will learn the rowspace of the true linear map from $x$ to $w$).

\begin{lemma}
    \label{thm:latent-rowspace}
For a pair $(x,z)$, let $z= \Astar\Bstar x + \xi$ where $\Astar\in \R^{T\times k}$ and $\Bstar\in \R^{k\times d}$ are the true parameters with $T,d\geq k$ and $\xi \in \R^{T}$ is mean-zero noise with bounded variance in each coordinate.
Assume that $\Astar,\Bstar$ are both rank $k$.
Suppose that $\E[x x^{\top}]$ is invertible.
Let $\hatA,\hatB$ be minimizers of the population risk $\E[\|\hatA\hatB x - z\|^2]$ of the multiple-output regression problem.
Then $\vspan\{\Bstar_1, \dots, \Bstar_k\} = \vspan\{\hatB_1,\dots,\hatB_k\}$ where $\Bstar_i, \hatB_i$ are the $i$-th rows of their respective matrices.
\end{lemma}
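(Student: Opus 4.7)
The plan is in two stages: first I show that any minimizer of the population risk must satisfy the matrix equation $\hatA\hatB = \Astar\Bstar$, and then I deduce the rowspace equality from a simple rank argument.

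For the first stage, I expand the objective using $z = \Astar\Bstar x + \xi$. Since $\xi$ is mean-zero and uncorrelated with $x$ (which holds in our setup because $\xi = \Cstar u$ with $u$ independent of $x$), the cross term $\E[\langle (AB - \Astar\Bstar)x, \xi\rangle]$ vanishes and we get
\[
\E[\|ABx - z\|^2] = \E[\|(AB - \Astar\Bstar)x\|^2] + \E[\|\xi\|^2].
\]
The second summand does not depend on $A, B$. Writing $M = AB - \Astar\Bstar$, the first summand equals $\tr(M\,\E[xx^\top]\,M^\top)$; since $\E[xx^\top]$ is positive definite, this quantity is non-negative and vanishes iff $M = 0$. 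The pair $A = \Astar, B = \Bstar$ is feasible (right shapes) and attains this lower bound, so any minimizer must satisfy $\hatA\hatB = \Astar\Bstar$.

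For the second stage, I use ranks. Since $\Astar \in \R^{T\times k}$ has rank $k$ it is injective on $\R^k$; combined with $\Bstar \in \R^{k\times d}$ being surjective onto $\R^k$, this gives $\rank(\Astar\Bstar) = k$. Hence $\hatA\hatB$ also has rank $k$, and the inequalities $\rank(\hatA\hatB) \le \rank(\hatB) \le k$ (the second because $\hatB$ has only $k$ rows) force $\rank(\hatB) = k$. The rowspace of any product $PQ$ is contained in the rowspace of $Q$, so $\vspan\{(\hatA\hatB)_i\} \subseteq \vspan\{\hatB_i\}$ is an inclusion between two $k$-dimensional subspaces and therefore an equality. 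The same argument applied to $\Astar\Bstar$ gives $\vspan\{(\Astar\Bstar)_i\} = \vspan\{\Bstar_i\}$. Combining with $\hatA\hatB = \Astar\Bstar$ yields the claim.

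There is no genuine obstacle here; the only conceptual subtlety is that the bilinear parameterization $(A,B)\mapsto AB$ has a continuous symmetry $(A,B)\mapsto (AR^{-1}, RB)$ for invertible $R\in\R^{k\times k}$, so individual minimizers $(\hatA,\hatB)$ are not unique. The rowspace of $\hatB$ is invariant under this symmetry, which is precisely why the lemma states identifiability at the level of $\vspan\{\hatB_1,\dots,\hatB_k\}$ rather than of the matrix $\hatB$ itself.
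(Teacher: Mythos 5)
Your proposal is correct and follows essentially the same route as the paper's proof: both first show $\hatA\hatB = \Astar\Bstar$ by decomposing the risk into a signal term (which vanishes iff $AB = \Astar\Bstar$, using invertibility of $\E[xx^\top]$) plus an irreducible noise term, and then conclude via a rank argument that the rowspaces of $\hatB$ and $\Bstar$ coincide. The only cosmetic differences are that the paper argues coordinate-wise via uniqueness of the OLS minimizer and invokes Sylvester's rank inequality, whereas you work with the trace form and injectivity/surjectivity directly; both correctly rely (as you explicitly note) on $\xi$ being uncorrelated with $x$.
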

\begin{proof}
We first consider solving for the product of the weights $\hatA\hatB$.
Letting $C_i$ denote the $i$-th row of $C$, the population risk can be decomposed into the risks of the $T$ coordinates of the output:
\begin{align}
    \E[\|Cx - z\|^2]
    &= \sum_{i=0}^T \E[(C_i^\top x - z_i)^2] \\
    &= \sum_{i=0}^T \E[(C_i^\top x - (\Astar\Bstar)_i^\top x - \xi_i)^2] \\
    &= \sum_{i=0}^T \E[(C_i^\top x - (\Astar\Bstar)_i^\top x)^2] + \E[\xi_i^2]
\end{align}

Each term in the sum is the ordinary least squares regression loss, so a standard result is that since $\E[xx^T]$ is invertible, the unique minimizer is $C_i = (\Astar\Bstar)_i$.
One way to see this is to note that the loss is convex in $C$, and (by taking derivatives) if $\E[xx^T]$ is invertible the unique stationary point is $C = \Astar \Bstar$.
Therefore, we have that the product of the learned parameters and the true parameters are equal:
\begin{equation}
    \hatA\hatB = \Astar\Bstar
\end{equation}

By e.g., Sylvester's rank inequality, $\Astar \Bstar$ must be rank $k$, and so $\hatA \hatB$ is rank $k$ (since they are equal).
This means that $\hatA, \hatB$ are each rank $k$.
Now $\Astar \Bstar$ and $\hatA \hatB$ have the same rowspace because they are equal.
The rowspace of $\Astar \Bstar$ is a subspace of the rowspace of $\Bstar$, but both have rank $k$ so they are equal.
Similarly the rowspace of $\hatA \hatB$ and $\hatB$ are equal.
This implies that the rowspace of $\hatB$ and $\Bstar$ are equal, which is the desired result.
\end{proof}

Our next lemma shows that for any fixed training examples $X$ and \emph{arbitrary} test example $x'$, the aux-outputs model will have better expected risk than the baseline where the expectation is taken over the training labels $Y \mid X$.

\begin{lemma}
\label{lem:baselineVsOutputsExcess}
In the linear setting, fix data matrix $X$ and consider arbitrary test example $x'$.
Let $\thetastar = \Bstar^{\top} \tw$ be the optimal (ground truth) linear map from $x$ to $y$.
The expected excess risk of the aux-outputs model $\hatB^{\top} \twoutput$ is better than for the baseline $\txbaseline$, where the expectation is taken over the training targets $Y \sim P_{Y \mid X}$ ($Y$ shows up implicitly because the estimators $\twoutput$ and $\txbaseline$ depend on $Y$):
\begin{equation}
\E[(\dotp{\twoutput}{\hatB x'} - \dotp{\thetastar}{x'})^2] \leq \E[(\dotp{\txbaseline}{x'} - \dotp{\thetastar}{x'})^2]
\end{equation}
\end{lemma}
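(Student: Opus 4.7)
The plan is to reduce the lemma to the standard linear-algebra fact that for any PSD $A \succ 0$ and full row-rank $B$, one has $A^{-1} \succeq B^\top (BAB^\top)^{-1} B$. The two estimators will be expressed in closed form, shown to be \emph{unbiased} for $\thetastar$ (this is the one step that uses Lemma~\ref{thm:latent-rowspace}), and then compared via their variances.

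\textbf{Step 1: closed forms and unbiasedness.} Write $Y = X\thetastar + \eta$ with $\eta := U\tu + \epsilon$. Since the rows of $U$ and the entries of $\epsilon$ are independent, mean zero, with $\E[(\dotp{\tu}{u_i})^2] = \sigmau^2$ and $\E[\epsilon_i^2] = \sigma^2$, we have $\E[\eta \mid X] = 0$ and $\E[\eta\eta^\top \mid X] = (\sigma^2 + \sigmau^2)\, I_n$. For the baseline, $\txbaseline - \thetastar = (X^\top X)^{-1} X^\top \eta$. For the aux-outputs estimator, treat $\hatB$ as deterministic (the pre-training step uses population risk on unlimited unlabeled data), and invoke Lemma~\ref{thm:latent-rowspace} to conclude that $\rowspan(\hatB) = \rowspan(\Bstar)$. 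Hence $\thetastar = \Bstar^\top \tw$ lies in the rowspace of $\hatB$, so $\thetastar = \hatB^\top \alpha$ for some $\alpha \in \R^k$. Plugging this in,
\[
\twoutput = (\hatB X^\top X \hatB^\top)^{-1} \hatB X^\top(X\hatB^\top \alpha + \eta) = \alpha + (\hatB X^\top X \hatB^\top)^{-1} \hatB X^\top \eta,
\]
so $\hatB^\top \twoutput - \thetastar = \hatB^\top (\hatB X^\top X \hatB^\top)^{-1} \hatB X^\top \eta$.

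\textbf{Step 2: variance at $x'$.} Taking expectations over $Y \mid X$ and using $\E[\eta\eta^\top\mid X] = (\sigma^2 + \sigmau^2) I_n$, routine computation gives
\[
\E\!\left[(\dotp{\txbaseline - \thetastar}{x'})^2\right] = (\sigma^2 + \sigmau^2)\; x'^\top (X^\top X)^{-1} x',
\]
and
\[
\E\!\left[(\dotp{\hatB^\top \twoutput - \thetastar}{x'})^2\right] = (\sigma^2 + \sigmau^2)\; x'^\top \hatB^\top (\hatB X^\top X \hatB^\top)^{-1} \hatB\, x'.
\]

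\textbf{Step 3: linear-algebra comparison.} It remains to show $x'^\top \hatB^\top (\hatB A \hatB^\top)^{-1} \hatB\, x' \le x'^\top A^{-1} x'$, where $A = X^\top X$ (invertible a.s.\ since $n \ge d$ and $P_x$ has a density). Write $A = L L^\top$ (Cholesky) and set $\tilde B = \hatB L \in \R^{k \times d}$. Then
\[
\hatB^\top (\hatB A \hatB^\top)^{-1} \hatB = L^{-\top}\, \tilde B^\top (\tilde B \tilde B^\top)^{-1} \tilde B\, L^{-1},
\]
and $\tilde B^\top (\tilde B \tilde B^\top)^{-1} \tilde B$ is the orthogonal projector onto $\rowspan(\tilde B)$, hence has operator norm $\le 1$. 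Therefore
\[
x'^\top \hatB^\top (\hatB A \hatB^\top)^{-1} \hatB\, x' \le \|L^{-1} x'\|_2^2 = x'^\top A^{-1} x',
\]
which completes the proof.

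\textbf{Main obstacle.} The only subtle step is Step~1: without Lemma~\ref{thm:latent-rowspace} one would pick up a nonzero bias term $\bigl(I - \hatB^\top(\hatB X^\top X \hatB^\top)^{-1}\hatB X^\top X\bigr)\thetastar$ for the aux-outputs estimator, which would not in general be dominated by the baseline error. The identity $\rowspan(\hatB) = \rowspan(\Bstar)$ is precisely what kills this bias and allows the comparison to be reduced to a clean variance (and hence linear-algebra) statement.
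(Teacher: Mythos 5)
Your proposal is correct and follows essentially the same route as the paper's proof: both compute the excess risk of each estimator as $(\sigma^2+\sigmau^2)$ times a quadratic form, use Lemma~\ref{thm:latent-rowspace} to ensure the aux-outputs regression is well-specified (no bias term), and reduce the comparison to showing $(X^\top X)^{-1} - \hatB^\top(\hatB X^\top X\hatB^\top)^{-1}\hatB \succcurlyeq 0$ via a factorization $X^\top X = LL^\top$. The only cosmetic difference is that you finish by observing $\tilde B^\top(\tilde B\tilde B^\top)^{-1}\tilde B$ is an orthogonal projector with operator norm at most $1$, whereas the paper notes the complementary matrix $M$ satisfies $M = M^2 = MM^\top$ and is hence PSD --- the same fact stated two ways.
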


\begin{proof}
Let $\epsilon_{all} = Y - X \thetastar$ be the training noise.
From standard calculations, the instance-wise risk of $\txbaseline$ for any $x$ is
\begin{align}
    \E[(\txbaseline^\top x' - \thetastar^\top x')^2] &= \E[(((X^\top X)^\minv X^\top Y)^\top x' - \thetastar^\top x')^2]\\
                                                          &= \E[((\thetastar + (X^\top X)^\minv X^\top \epsilon_{all})^\top x' - \thetastar^\top x')^2]\\
                                                          &= \E[(((X^\top X)^\minv X^\top \epsilon_{all})^\top x')^2]\\
                                                          &= (\sigma^2 + \sigmau^2) {x'}^\top (X^\top X)^\minv  x'
\end{align}
By Lemma~\ref{thm:latent-rowspace}, $\hatB=QB$ for some full rank $Q$.
Thus, learning $\twoutput$ is a regression problem with independent mean-zero noise and we can apply the same calculations for the instance-wise risk of $\hatB^\top\twoutput$.
\begin{equation}
    \E[(\twoutput^{\top}\hatB x' - \thetastar^\top x')^2] = (\sigma^2 + \sigmau^2) {x'}^\top \hatB^\top (\hatB X^\top X\hatB^\top)^\minv\hatB x'.
\end{equation}
We show that the difference between the inner matrices is positive semi-definite, which implies the result. In particular, we show that
\begin{align}
	\label{eqn:output-thm-psd}
    (X^\top X)^\minv - \hatB^\top (\hatB X^\top X\hatB^\top)^\minv\hatB \succcurlyeq 0.
\end{align}

Since $X^\top X$ is a full rank PSD matrix, we can write $X^\top X = G G^\top$ for $G \in \mathbb{R}^{d \times d}$ where $G$ is full rank and therefore invertible.
Expressing Equation~\ref{eqn:output-thm-psd} in terms of $G$, we want to show:
\begin{align}
    (G G^\top)^\minv - \hatB^\top (\hatB G G^\top \hatB^\top)^\minv\hatB \succcurlyeq 0.
\end{align}
Left multiplying by $G^\top$ and right multiplying by $G$, which are both invertible, this is equivalent to showing:
\begin{align}
    M := I - (\hatB G)^\top (\hatB G G^\top \hatB^\top)^\minv (\hatB G) \succcurlyeq 0.
\end{align}
But we note that $M$ is symmetric, with $M = M^2 = MM^\top$, so $M$ is PSD.
This completes the proof.

\end{proof}

\begin{proof}[Proof of Theorem~\ref{thm:auxOutputHelpsEverywhere}]
Fix training examples $X$ and test example $x'$ but let the train labels $Y \sim P_{Y \mid X}$ and and test label $y' \sim P_{y' \mid x'}'$ be random.
In particular, let ${\sigmau'}^2 = \E[(\dotp{\tu}{u'})^2]$ where $u' \sim P_u'$, with $\E[u'] = 0$.
Then for the baseline OLS estimator, we have:
\begin{equation}
\E[(y' - \dotp{\txbaseline}{x'})^2] = {\sigmau'}^2 + \sigma^2 + \E[(\txbaseline^\top x' - \thetastar^\top x')^2]
\end{equation}
For the aux-outputs model, we have:
\begin{equation}
\E[(y' - \dotp{\twoutput}{\hatB x'})^2] = {\sigmau'}^2 + \sigma^2 + \E[(\dotp{\twoutput}{\hatB x'} - \dotp{\thetastar}{x'})^2]
\end{equation}
So applying Lemma~\ref{lem:baselineVsOutputsExcess}, we get that the risk for the aux-outputs model is better than for the baseline (the lemma showed it for the excess risk):
\begin{equation}
\E[(y' - \dotp{\twoutput}{\hatB x'})^2] \leq \E[(y' - \dotp{\txbaseline}{x'})^2]
\end{equation}
Since this is true for all $X$ and $x'$, it holds when we take the expectation over the training examples $X$ from $P_x$ and the test example $x'$ from $P_x'$ which gives us the desired result.
\end{proof}

\subsection{In-N-Out improves risk under arbitrary covariate shift}

\newtheorem*{selfTrainingHelpsEverywhereTheorem}{Restatement of Theorem~\ref{thm:self-train}}

\begin{selfTrainingHelpsEverywhereTheorem}
\selfTrainingHelpsEverywhereText{}
\end{selfTrainingHelpsEverywhereTheorem}

We first show a key lemma that lets us bound the min singular values of a random matrix, which will let us upper bound the risk of the In-N-Out estimator and lower bound the risk of the pre-training estimator.

\begin{definition}
As usual, the min singular value $\singularmin(W)$ of a rectangular matrix $W \in \R^{n \times k}$ where $n \geq k$ refers to the $k$-th largest singular value (the remaining $n-k$ singular values are all $0$), or in other words:
\begin{equation}
\singularmin(W) = \min_{\|\nu\|_2 = 1} \| W \nu \|_2.
\end{equation}
\end{definition}

\begin{lemma}
\label{lem:min-sing-lemma}
Let $P_w$ and $P_u$ be independent distributions on $\R^k$ and $\R^m$ respectively.
Suppose they are absolutely continuous with respect to the standard Lebesgue measure on $\R^k$ and $\R^m$ respectively (e.g., this is true if they have density everywhere with density upper bounded).
Let $W \in \R^{n \times k}$ where each row $W_i$ is sampled independently $W_i \sim P_w$.
Let $U \in \R^{n \times m}$ where each row $U_i$ is sampled independently $U_i \sim P_u$.
Suppose $n \geq k + m$.
For all $\delta$, there exists $c(\delta) > 0$ such that with probability at least $1 - \delta$, the minimum singular values $\singularmin$ are lower bounded by $c(\delta)$: $\singularmin(W) > c(\delta)$ and $\singularmin([W; U]) > c(\delta)$.
\end{lemma}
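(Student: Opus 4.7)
\textbf{My plan} is to decouple the quantitative lower bound into two independent ingredients: first, a qualitative statement that $\singularmin(W) > 0$ and $\singularmin([W; U]) > 0$ almost surely, which I would prove by an algebraic-variety/Lebesgue-null argument using absolute continuity of $P_w$ and $P_u$; second, a quantitative step that upgrades ``positive almost surely'' into ``bounded below with probability $1-\delta$'' via continuity of probability measures from below.

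For the first step, I would observe that for any tall matrix $M \in \R^{n \times r}$ with $n \geq r$, we have $\singularmin(M) = 0$ iff $\det(M^\top M) = 0$. The map $M \mapsto \det(M^\top M)$ is a polynomial in the entries of $M$ that is not identically zero (it evaluates to $1$ at $M = [I_r; 0]$), so its vanishing set is a proper algebraic subvariety of $\R^{nr}$ and is therefore Lebesgue-null. Applying this with $r = k$ handles $W$, and with $r = k + m$ handles $[W; U]$; the hypothesis $n \geq k + m$ is precisely what guarantees the relevant minor polynomial is nontrivial.

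To transfer the Lebesgue-null conclusion into a probability statement, I would use that the row-wise i.i.d.\ construction makes the joint distribution of $W$ absolutely continuous on $\R^{nk}$ with density $\prod_{i=1}^n p_w(W_i)$, and by independence of $P_w$ and $P_u$, the joint distribution of $[W;U]$ is absolutely continuous on $\R^{n(k+m)}$ with density $\prod_{i=1}^n p_w(W_i)\, p_u(U_i)$. Absolutely continuous distributions assign zero probability to Lebesgue-null sets, so $\Pr(\singularmin(W) > 0) = \Pr(\singularmin([W; U]) > 0) = 1$.

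Finally, for the quantitative conversion, I would apply continuity from below: for each matrix the events $\{\singularmin > 1/j\}$ are nested in $j$ with union of probability $1$, so some finite $j$ yields probability at least $1 - \delta/2$. Choosing $j$ large enough to work for both matrices, setting $c(\delta) = 1/j$, and applying a union bound gives the simultaneous lower bound with probability at least $1 - \delta$. I expect this to be mostly bookkeeping; the only substantive point is confirming that the minor polynomial is nontrivial, which is where $n \geq k + m$ enters. The paper's hint about ``each column has a component orthogonal to the others'' suggests a more direct random-matrix-theory route to the same qualitative fact, but the algebraic-variety approach sidesteps explicitly quantifying that orthogonal component and fits cleanly with the continuity-of-measure step.
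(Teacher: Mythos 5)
Your proof is correct, but it takes a genuinely different route from the paper's. The paper argues row by row: conditioning on the other rows, the probability that row $W_i$ lies within distance $\lambda$ of their span tends to $0$ as $\lambda \to 0$ by absolute continuity, so a union bound over the $k$ (resp.\ $k+m$) rows gives every row a component of size at least $c(\delta)$ orthogonal to the others, which is then converted into a lower bound on $\singularmin$ via Pythagoras. You instead factor the claim into (i) $\singularmin > 0$ almost surely, proved by noting that $\det(M^\top M)$ is a not-identically-zero polynomial (nontrivial exactly when $n$ is at least the number of columns, which is where $n \geq k+m$ enters for $[W;U]$) whose zero set is Lebesgue-null and hence null for the absolutely continuous joint law of the entries, and (ii) an upgrade to a uniform bound $c(\delta) = 1/j$ via continuity of measure from below applied to the nested events $\{\singularmin > 1/j\}$, finishing with a union bound over the two matrices. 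Both arguments are non-constructive in the same place --- the paper's $c(\delta)$ also comes from a pure continuity-of-measure limit $P(S_\lambda) \to 0$ --- so neither yields a quantitative $c(\delta)$; your version buys a cleaner separation of the qualitative full-rank fact from the $\varepsilon$--$\delta$ bookkeeping and avoids the leave-one-out geometry, while the paper's version localizes the failure event to individual rows, which is the structure it reuses informally elsewhere (e.g.\ in Lemma~\ref{lem:lower_bound_random_product}). One small redundancy in yours: since $\singularmin([W;U]) \leq \singularmin(W)$ (restrict the minimizing unit vector to the first $k$ coordinates), the bound for $[W;U]$ already implies the one for $W$, so the union bound is not strictly needed; this does not affect correctness.
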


\begin{proof}
We note that the matrices $W$ and $U$ are rectangular, e.g., $W \in \mathbb{R}^{n \times k}$ where $n \geq k$.
We will prove the lemma for $W$ first, and the extension to $[W; U]$ will follow.

Note that removing the last $n-k$ rows of $W$ cannot increase its min singular value since that corresponds to projecting the vector $W \nu$ and projection never increases the Euclidean norm.
So without loss of generality, we suppose $W$ only consists of its first $k$ rows and so $W \in \mathbb{R}^{k \times k}$.

Now, consider any row $W_i$. 
We will use a volume argument to show that with probability at least $1 - \frac{\delta}{k}$, this row $W_i$ has a non-trivial component perpendicular to all the other rows.
Since all rows are independently and identically sampled, without loss of generality suppose $i = 1$.
Fix the other rows $W_2, \ldots, W_k$, since $W_1$ is independent of these other rows, the conditional distribution of $W_1$ is the same as the marginal of $W_1$.
The remaining rows $W_2, \ldots, W_k$ form a $k-1$ dimensional subspace $S$ in $\mathbb{R}^k$.
Letting $d(w, S)$ denote the Euclidean distance of a vector $w$ from the subspace $S$, define the event $S_{\lambda} = \{ W_1 : d(W_1, S) \leq \lambda \}$.
Since $P_w$ is absolutely continuous, $P(S_{\lambda}) \to 0$ as $\lambda \to 0$, so for some small $c(\delta) > 0$, $P(S_{c(\delta)}) < \frac{\delta}{k}$. So with probability at least $1 - \delta/k$, $d(W_1, S) > c(\delta)$.

By union bound, with probability at least $1 - \delta$, the distance from $W_i$ to the subspace spanned by all the other rows is greater than $c(\delta)$ for every row $W_i$, so we condition on this.
By representing each row vector as the sum of the component perpendicular to $S$ and a component in $S$, applying Pythagoras theorem and expanding we get
\begin{equation}
\min_{\|\nu\|_2 = 1} \|W \nu\| = \min_{\|\nu\|_2 = 1} \|\nu^{\top} W \| \geq c(\delta).
\end{equation}
Which completes the proof for $\singularmin(W)$.

For $[W; U]$, we note that $P_x$ and $P_u$ are independent, and the product measure is absolutely continuous.
Since each row of $[W; U]$ is identically and independently sampled just like with $W$, we can apply the exact same argument as above (though for a different constant $c(\delta)$, we take the min of these two as our $c(\delta)$ in the lemma statement).
\end{proof}

Recall that the In-N-Out estimator was obtained by fitting a model from $w, z$ to $y$, and then using that to produce pseudolabels on (infinite) unlabeled data, and then self-training a model from $w$ to $y$ on these pseudolabels.
For the linear setting, we defined the In-N-Out estimator $\twinnout$ in Equation~\ref{eqn:innoutLinearDfn}.
Our next lemma gives an alternate closed form of the In-N-Out estimator in terms of the representation matrix $W = X \hatB$ and the latent matrix $U$.

\begin{lemma}
\label{lem:innout-alt-estimator-closed-form}
In the linear setting, letting $W = X \hatB^{\top}$ we can write the In-N-Out estimator in closed form:
\begin{equation}
\label{eqn:innout-alt-estimator-closed-form}
\twinnout = [I_{k \times k}; 0_{k \times T}] \Big( \begin{pmatrix} W^{\top} \\ U^{\top} \end{pmatrix} (W; U) \Big)^{\minv} \begin{pmatrix} W^{\top} \\ U^{\top} \end{pmatrix} Y.
\end{equation}
\end{lemma}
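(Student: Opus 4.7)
The plan is to derive $\twinnout$ in two stages --- first apply the first-order condition to the population self-training objective, then write the aux-inputs OLS estimator in terms of an oracle OLS on $[W\mid U]$ --- and combine. Since \refeqn{innoutLinearDfn} is quadratic in $\twinnout$, setting its gradient to zero yields the normal equation $\E_{\Pid}[ww^\top]\,\twinnout = \E_{\Pid}\big[w(\hat{\gammaw}^\top w + \hat{\gammaz}^\top z)\big]$, so that
\begin{equation}
\twinnout = \hat{\gammaw} + \big(\E_{\Pid}[ww^\top]\big)^{\minv}\E_{\Pid}[wz^\top]\,\hat{\gammaz},
\end{equation}
and $\E[ww^\top]$ is invertible because $w = \hatB x$ with $\hatB$ rank $k$ (Lemma~\ref{thm:latent-rowspace}) and $x$ has invertible covariance.

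To simplify, Lemma~\ref{thm:latent-rowspace} gives $\hatB = Q\Bstar$ for some invertible $Q \in \R^{k\times k}$, hence $z = \Astar\Bstar x + \Cstar u = \Astar Q^{\minv}w + \Cstar u$. Since $u$ is independent of $x$ (hence $w$) and $\E[u]=0$, one obtains $\E[wz^\top] = \E[ww^\top]\,Q^{-\top}\Astar^\top$, which collapses the display above to $\twinnout = \hat{\gammaw} + Q^{-\top}\Astar^\top \hat{\gammaz}$. For the aux-inputs estimator I would rewrite the design matrix using $Z = W Q^{-\top}\Astar^\top + U\Cstar^\top$:
\begin{equation}
[W \mid Z] = [W \mid U]\,M, \qquad M = \begin{pmatrix} I_k & Q^{-\top}\Astar^\top \\ 0 & \Cstar^\top \end{pmatrix}.
\end{equation}
In the setting $T = m$ with $\Cstar$ full rank, $M$ is square and invertible, so OLS with design $[W\mid Z]$ is related to OLS with design $[W\mid U]$ by the bijective reparametrization $\gamma \mapsto M\gamma$. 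Letting $\tilde\theta = \big([W\mid U]^\top[W\mid U]\big)^{\minv}[W\mid U]^\top Y$, this gives $M\,[\hat{\gammaw};\hat{\gammaz}] = \tilde\theta$, whose first $k$ rows read exactly $\hat{\gammaw} + Q^{-\top}\Astar^\top \hat{\gammaz} = [I_{k\times k}\mid 0_{k \times T}]\tilde\theta$. Combining with the first display identifies $\twinnout$ with the first $k$ components of $\tilde\theta$, which is the claimed formula.

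The main point of technical care is invertibility: the argument needs $M$ to be invertible (clean under $T = m$; otherwise one must interpret the aux-inputs OLS via a minimum-norm pseudoinverse and check that the rank deficiency lies in directions annihilated by $\E[wz^\top]$ in stage one so $\twinnout$ is unaffected), and it needs the Gram matrix $[W\mid U]^\top[W\mid U]$ to be invertible so that $\tilde\theta$ is well-defined. The latter holds with high probability by Lemma~\ref{lem:min-sing-lemma} under $n \geq k + m$, which is exactly the sample-size assumption of Theorem~\ref{thm:self-train}. Everything else is a direct chain of linear-algebraic identities using independence of $x$ and $u$, $\E[u] = 0$, and $\hatB = Q\Bstar$.
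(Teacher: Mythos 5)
Your proposal is correct and follows essentially the same two-stage route as the paper: first solve the population self-training objective to collapse $\twinnout$ to $\hat{\gammaw}+Q^{-\top}\Astar^\top\hat{\gammaz}$, then identify this with the leading $k$ coordinates of the OLS estimator on the design $[W\mid U]$ via the linear change of variables induced by $z=\Astar w+\Cstar u$. Your factorization $[W\mid Z]=[W\mid U]M$ is a compact repackaging of the paper's two-directional conversion between the two least-squares problems; you are in fact more careful than the paper about the factor $Q$ relating $\hatB$ to $\Bstar$ (the paper defers this to a later ``without loss of generality'' remark). The one place your argument is weaker is the case $T>m$: there $M\in\R^{(k+m)\times(k+T)}$ is not square, and the right repair is not a minimum-norm pseudoinverse. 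What is actually needed is only that $M$ has full row rank $k+m$ (which holds because $\Cstar^\top$ has a right inverse), so that the two problems have equal optimal values; then, since $[W\mid U]$ is full rank, its least-squares problem has a \emph{unique} minimizer $\tilde\theta$, and $\|Y-[W\mid U]M\gamma\|_2=\|Y-[W\mid Z]\gamma\|_2$ forces $M\gamma=\tilde\theta$ for \emph{every} minimizer $\gamma$ of the (possibly degenerate) aux-inputs problem, regardless of which one is selected. This is exactly how the paper's proof handles the general case, and it slots directly into your framework: the first $k$ rows of $M\gamma$ are then well defined and equal $[I_{k\times k}\mid 0]\tilde\theta$ as claimed.
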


\begin{proof}
\newcommand{\tualt}{\hat{\tu'}}
\newcommand{\twalt}{\twinnout'}
We recall the definition of the In-N-Out estimator, where we first train a classifier from $W, Z$ to $Y$:
\begin{equation}
\label{eqn:innout-original-estimator-minimizer}
\hat{\gammaw}, \hat{\gammaz} = \argmin_{\hat{\gammaw}, \hat{\gammaz}} \|Y - (W \hat{\gammaw} + Z \hat{\gammaz})\|_2.
\end{equation}
Denote the minimum value of Equation~\ref{eqn:innout-original-estimator-minimizer} by $p^*$.
Note that $\hat{\gammaw}, \hat{\gammaz}$ may not be unique, and we pick any solution to the $\argmin$ (although our proof will reveal that the resulting $\twinnout$ is in fact unique).
We then use this to produce pseudolabels and self-train, on infinite data, which gives us the In-N-Out estimator:
\begin{equation}
\label{eqn:innoutLinearDfn}
\twinnout = \argmin_{\twinnout} \E_{w, z \sim \Pid} \Big[ \big(\dotp{\twinnout}{w} - (\dotp{\hat{\gammaw}}{w} + \dotp{\hat{\gammaz}}{z}) \big)^2 \Big]
\end{equation}
Subtituting $z = \Astar w + \Cstar u$, we can write the loss that the In-N-Out estimator $\twinnout$ minimizes as:
\begin{equation}
\E_{w, z \sim \Pid} \Big[ \big(\dotp{\twinnout}{w} - (\dotp{(\hat{\gammaw} + {\Astar}^\top \hat\gammaz)}{w} + \dotp{({\Cstar}^\top \hat{\gammaz})}{u}) \big)^2 \Big]
\end{equation}
We group the terms slightly differently:
\begin{equation}
\E_{w, z \sim \Pid} \Big[ \big((\dotp{\twinnout}{w} - \dotp{(\hat{\gammaw} + {\Astar}^\top \hat\gammaz)}{w}) - \dotp{({\Cstar}^\top \hat{\gammaz})}{u} \big)^2 \Big]
\end{equation}
Expanding the square, using the fact that $\E[u] = 0$, and $u, w$ are independent, and ignoring terms with no dependency on $\twinnout$, this is equivalent to minimizing:
\begin{equation}
\E_{w \sim \Pid} \Big[ \big(\dotp{\twinnout}{w} - \dotp{(\hat{\gammaw} + {\Astar}^\top \hat\gammaz)}{w} \big)^2 \Big]
\end{equation}
This is minimized (indeed it is $0$) by setting:
\begin{equation}
\twinnout = \hat{\gammaw} + {\Astar}^\top \hat\gammaz
\end{equation}
The minimizer is unique because $w$ has invertible covariance matrix (since $x$ has invertible covariance matrix and $\Bstar$ is full rank), and so is in every direction with some probability.
We will now consider the following alternative estimator:
\begin{equation}
\label{eqn:innout-alt-estimator-minimizer}
\twalt, \tualt = \argmin_{\twalt, \tualt} \|Y - (W \twalt + U \tualt)\|_2.
\end{equation}
Denote the minimum value of Equation~\ref{eqn:innout-alt-estimator-minimizer} by $q^*$.
We claim that $\twalt = \twinnout$.

We will show that the In-N-Out estimator $\twinnout$ minimizes the alternative minimization problem in Equation~\ref{eqn:innout-alt-estimator-minimizer} by showing that $p^* = q^*$.
We will then show that the solution to Equation~\ref{eqn:innout-alt-estimator-minimizer} is unique, which implies that $\twalt = \twinnout$.

We note that $\Cstar^{\top} \in \mathbb{R}^{m \times T}$ where $T \geq m$ is full-rank, so there exists a right-inverse $C'$ with $\Cstar^{\top} C' = I_{m \times m}$.
Since $Z = W \Astar^{\top} + U \Cstar^{\top}$, this gives us $U = (Z - W \Astar^{\top}) C' = Z C' + W(-\Astar^{\top}C')$.

So this means that a solution to the alternative problem in Equation~\ref{eqn:innout-alt-estimator-minimizer} can be converted into a solution for the original in Equation~\ref{eqn:innout-original-estimator-minimizer} with the same function value:
\begin{align}
&\; \min_{\twalt, \tualt} \| Y - (W \twalt + U \tualt) \|_2 \\
=&\; \min_{\twalt, \tualt} \| Y - (W \twalt + (Z C' + W(-\Astar^{\top}C')) \tualt) \|_2 \\
=&\; \min_{\twalt, \tualt} \| Y - (W (\twalt -\Astar^{\top}C' \tualt) + Z (C' \tualt)) \|_2.
\end{align}
This implies that $p^* \leq q^*$.

We now show that a solution to the original problem in Equation~\ref{eqn:innout-original-estimator-minimizer} can be converted into a solution for the alternative in Equation~\ref{eqn:innout-alt-estimator-minimizer} with the same function value:
\begin{align}
&\; \min_{\hat{\gammaw}, \hat{\gammaz}} \| Y - (W \hat{\gammaw} + Z \hat{\gammaz}) \|_2 \\
=&\; \min_{\hat{\gammaw}, \hat{\gammaz}} \| Y - (W \hat{\gammaw} + (W \Astar^{\top} + U \Cstar^{\top}) \hat{\gammaz}) \|_2 \\
=&\; \min_{\hat{\gammaw}, \hat{\gammaz}} \| Y - (W (\hat{\gammaw} + \Astar^{\top}\hat{\gammaz}) + U (\Cstar^{\top} \hat{\gammaz})) \|_2.
\end{align}
This implies that $q^* \leq p^*$, and we showed before that $p^* \leq q^*$ so $p^* = q^*$.
But since $\hat{\gammaw}, \hat{\gammaz}$ minimizes the original minimizer in Equation~\ref{eqn:innout-original-estimator-minimizer}, $\hat{\gammaw} + \Astar^{\top}\hat{\gammaz}, \Cstar^{\top} \hat{\gammaz}$ minimize the alternative problem in Equation~\ref{eqn:innout-alt-estimator-minimizer}, where $\twinnout = \hat{\gammaw} + \Astar^{\top}\hat{\gammaz}$.

Since $[W; U]$ is full rank, the solution $\twalt, \tualt$ to the alternative estimator Equation~\ref{eqn:innout-alt-estimator-minimizer} is unique.
So this means that $\twalt = \twinnout$.

We have shown that $\twalt = \twinnout$---this completes the proof because solving Equation~\ref{eqn:innout-alt-estimator-minimizer} for $\twalt$ gives us the closed form in Equation~\ref{eqn:innout-alt-estimator-closed-form}.

\end{proof}

Next we show a technical lemma that says that if a random vector $u \in \R^n$ has bounded density everywhere, then for any $v$ with high probability the dot product $(\dotp{u}{v})^2$ cannot be too small relative to $\|v\|_2^2$.

\begin{lemma}
\label{lem:lower_bound_random_product}
Suppose a random vector $u \in \R^n$ has density everywhere, with bounded density.
For every $\delta$, there exists some $c(\delta)$ such that for all $v$, with probability at least $1 - \delta$ over $u$, $(\dotp{u}{v})^2 \geq c(\delta) \|v\|_2^2$.
\end{lemma}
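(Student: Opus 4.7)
By homogeneity, the inequality $(\dotp{u}{v})^2 \geq c(\delta) \|v\|_2^2$ is trivially true when $v = 0$, and otherwise is equivalent to $(\dotp{u}{v/\|v\|_2})^2 \geq c(\delta)$. Hence it suffices to establish the statement uniformly for unit vectors $\nu$ on the sphere $S^{n-1}$, i.e., to find $\epsilon = \epsilon(\delta) > 0$ such that
\begin{equation}
\sup_{\nu \in S^{n-1}} \Pr\bigl(|\dotp{u}{\nu}| < \epsilon\bigr) \leq \delta.
\end{equation}
Then we take $c(\delta) = \epsilon(\delta)^2$.

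My plan is to control the slab probability by combining a tightness argument with the bounded density. Let $M < \infty$ denote the sup of the density of $u$. Since $u$ has a density on $\R^n$, the distribution is tight, so we may pick $R = R(\delta)$ large enough that $\Pr(\|u\|_2 > R) < \delta/2$. For any unit vector $\nu$ and any $\epsilon > 0$, the slab $\{u : |\dotp{u}{\nu}| < \epsilon\} \cap B(0, R)$ is contained in a cylinder of height $2\epsilon$ (in the $\nu$-direction) and base equal to a ball of radius $R$ in the $(n-1)$-dimensional orthogonal complement of $\nu$, hence its Lebesgue measure is at most $2\epsilon \cdot \omega_{n-1} R^{n-1}$, where $\omega_{n-1}$ is the volume of the unit ball in $\R^{n-1}$. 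Using the density bound,
\begin{equation}
\Pr\bigl(|\dotp{u}{\nu}| < \epsilon\bigr) \leq \Pr(\|u\|_2 > R) + M \cdot 2\epsilon\, \omega_{n-1} R^{n-1} \leq \delta/2 + 2 M \omega_{n-1} R^{n-1} \epsilon.
\end{equation}
Choosing $\epsilon = \epsilon(\delta) := \delta / (4 M \omega_{n-1} R^{n-1})$ makes the right-hand side at most $\delta$. The bound is uniform in $\nu$ because $R$ and $\epsilon$ depend only on $\delta$ (and on the fixed distribution of $u$).

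Setting $c(\delta) := \epsilon(\delta)^2$, we conclude that for every $v \in \R^n$, with probability at least $1-\delta$ we have $(\dotp{u}{v})^2 \geq c(\delta)\|v\|_2^2$, as desired. The only non-routine step is the reduction to the unit-sphere/slab estimate; once we have that, tightness plus bounded density immediately controls the slab probability, and the main thing to be careful about is ensuring that the choice of $R$ (and thus of $\epsilon$) depends only on $\delta$ and not on $\nu$, which is automatic since the ball $B(0,R)$ is rotationally symmetric.
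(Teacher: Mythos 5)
Your proof is correct and follows essentially the same route as the paper's: truncate to a ball of radius $R(\delta)$ using tightness, bound the Lebesgue volume of the slab $\{|\dotp{u}{\nu}| < \epsilon\}$ within that ball, and multiply by the density bound to make the slab probability at most $\delta/2$, uniformly in the direction $\nu$. Your version is in fact slightly more carefully stated than the paper's (the explicit cylinder volume $2\epsilon\,\omega_{n-1}R^{n-1}$ versus the paper's looser ``area times $B_1\epsilon$'' bound), but the underlying argument is identical.
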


\begin{proof}
First, we choose some $B_0$ such that $P(\|u\|_2 \geq B_0) \leq \delta/2$, such a $B_0$ exists for every probability measure.

Suppose that the density is upper bounded by $B_1$.
Let the area of the $n-1$ dimensional sphere with radius $B_0$ be $A_0$.
Consider any $n-1$ dimensional subspace $S$, and let $S_{\epsilon} = \{ u' : d(u', S) \leq \epsilon \}$ where $d(u', S)$ denotes the Euclidean distance from $u'$ to $S$.
We have $P(u \in S_{\epsilon}) \leq A_0 B_1 \epsilon + \delta/2$ for all $S_{\epsilon}$.
By choosing sufficiently small $\epsilon > 0$, we can ensure that $P(u \in S_{\epsilon}) \leq \delta$ for all $S$.

Now consider arbitrary $v$ and let $S(v)$ be the $n-1$-dimensional subspace perpendicular to $v$.
We have $P(u \in S(v)_{\epsilon}) \leq \delta$.
But this means that $(\dotp{u}{v})^2 \geq \epsilon^2 \|v\|_2^2$ with probability at least $1-\delta$, which completes the proof.
\end{proof}

By definition of our linear multi-task model, we recall that $y = \dotp{\tw}{w} + \dotp{\tu}{u} + \epsilon$, where $w = \Bstar x$.
We do not have access to $\Bstar$, but we assumed that $\Bstar$ is full rank.
We learned $\hatB$ which has the same rowspace as $\Bstar$ (Lemma ~\ref{thm:latent-rowspace}).
This means that for some $\tw'$, we have $y = \dotp{\tw'}{\hat{w}} + \dotp{\tu}{u} + \epsilon$ where $\hat{w} = \hatB x$.
To simplify notation and avoid using $\tw'$ and $\hat{w}$ everywhere, we suppose without loss of generality that $\hatB = \Bstar$ (but formally, we can just replace all the occurrences of $\tw$ by $\tw'$ and $w$ by $\hat{w}$).

Our next lemma lower bounds the test error of the pre-training model.

\begin{lemma}
\label{lem:self_train_outputs_lower_bound}
In the linear setting, for all problem settings $\probsetting$ with $\sigmau^2 > 0$, for all $\delta$, there exists some $a, b > 0$ such that with probability at least $1 - \delta$ over the training examples and test example $x' \sim P_x'$ the risk of the aux-outputs model is lower bounded:
\begin{equation}
\Routood - R^* > a - b\sigma^2.
\end{equation}
\end{lemma}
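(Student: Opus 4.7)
My plan is to lower bound $\Routood - R^*$ by splitting it into a term driven by the latent-variable noise $U$ (which does not vanish as $\sigma \to 0$) and a term driven by the label noise $\epsilon$ (which is $O(\sigma^2)$), and then to handle each via a separate concentration argument. First, since the Bayes-optimal predictor from $x'$ alone is $\dotp{\tw}{w'}$, we have $R^* = {\sigmau'}^2 + \sigma^2$ and $\Routood - R^* = (\dotp{\tw - \twoutput}{w'})^2$. By Lemma~\ref{thm:latent-rowspace} we may assume $\hatB = \Bstar$, so $W = X\Bstar^\top$ and the closed form $\twoutput = \tw + (W^\top W)^{\minv}W^\top(U\tu + \epsilon)$ gives
\begin{equation*}
\Routood - R^* = (\gamma^\top N + \gamma^\top \epsilon)^2, \qquad \gamma := W(W^\top W)^{\minv} w', \quad N := U\tu,
\end{equation*}
where $\gamma \in \R^n$ depends only on $X$ and $x'$ and is therefore independent of both $N$ and $\epsilon$. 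The elementary inequality $(p+q)^2 \geq p^2/2 - q^2$ then yields $\Routood - R^* \geq \tfrac12 (\gamma^\top N)^2 - (\gamma^\top \epsilon)^2$, so it suffices to lower bound the first term by a positive constant and upper bound the second by a multiple of $\sigma^2$, uniformly in $P_\epsilon$.

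For the first term, $\sigmau^2 > 0$ forces $\tu \neq 0$, so each entry $N_i = \dotp{\tu}{U_i}$ is a nontrivial marginalization of $P_u$ and hence has bounded density on $\R$; thus $N$ has bounded density on $\R^n$. Conditioning on $\gamma$ (independent of $N$) and invoking Lemma~\ref{lem:lower_bound_random_product} gives $(\gamma^\top N)^2 \geq c_1(\delta)\|\gamma\|_2^2$ with probability $\geq 1 - \delta/4$. For the second term, $\E_\epsilon[(\gamma^\top \epsilon)^2 \mid \gamma] = \sigma^2 \|\gamma\|_2^2$, so Markov's inequality yields $(\gamma^\top \epsilon)^2 \leq (4/\delta)\sigma^2 \|\gamma\|_2^2$ with probability $\geq 1 - \delta/4$, and this constant depends on neither $\sigma$ nor $P_\epsilon$. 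Finally, $\|\gamma\|_2^2 = {w'}^\top (W^\top W)^{\minv} w'$ can be sandwiched in a deterministic interval $[c_l, c_u] \subset (0, \infty)$ with probability $\geq 1 - \delta/2$ over $X$ and $x'$: $c_u$ comes from a lower bound on $\singularmin(W)$ via Lemma~\ref{lem:min-sing-lemma} together with a standard tail bound on $\|w'\|_2$, while $c_l$ comes from a tail bound on $\singularmax(W)$ together with a lower bound on $\|w'\|_2$, which holds because $w' = \Bstar x'$ has density on $\R^k$ (since $\Bstar$ is full rank and $P_x'$ is absolutely continuous), so $P(\|w'\|_2 \leq \eta) \to 0$ as $\eta \to 0$. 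A union bound then yields $\Routood - R^* \geq c_1(\delta) c_l / 2 - (4 c_u / \delta)\sigma^2$ with probability $\geq 1 - \delta$, which is the claimed form with $a = c_1(\delta) c_l / 2 > 0$ and $b = 4 c_u / \delta > 0$.

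The main obstacle is that $a$ and $b$ must be independent of $P_\epsilon$, so no sub-Gaussian tail on $\epsilon$ is available; only the second-moment Markov bound can be used. This is fine because we only need a \emph{lower} bound on the risk, not sharp concentration. A secondary technical point is that both the anti-concentration lower bound on $(\gamma^\top N)^2$ and the Markov upper bound on $(\gamma^\top \epsilon)^2$ scale by $\|\gamma\|_2^2$, so two-sided high-probability control of $\|\gamma\|_2^2$ is essential: if $\|\gamma\|_2^2$ were allowed to degenerate, the positive constant $a$ could be swallowed by $b\sigma^2$ for small $\sigma$, and both this lemma and the small-noise behavior highlighted in the remark after Theorem~\ref{thm:self-train} would fail.
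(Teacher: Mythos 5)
Your proposal follows essentially the same route as the paper's proof: the same reduction of $\Routood - R^*$ to $\bigl((U\tu + \epsilon)^\top W (W^\top W)^{\minv} w'\bigr)^2$, the same use of Lemma~\ref{lem:min-sing-lemma} and a tail bound on $W^\top W$ to control $\|\gamma\|_2$ from both sides, the same anti-concentration step via Lemma~\ref{lem:lower_bound_random_product} applied to $U\tu$ (with the same implicit leap that $\tu^\top U_i$ inherits a bounded density from $P_u$), and the same non-atomicity argument for $\|w'\|_2$. The one genuine divergence is how the label-noise term is separated from the latent-noise term: the paper expands the square and controls the cross term $2(\epsilon^\top\nu)((U\tu)^\top\nu)$ by Cauchy--Schwarz plus Chebyshev, whereas you use $(p+q)^2 \geq \tfrac{1}{2}p^2 - q^2$ and then Markov on $(\gamma^\top\epsilon)^2$. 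Your version is slightly cleaner and honestly delivers an $O(\sigma^2)$ correction using only the second moment of $\epsilon$ (as required for uniformity over $P_\epsilon$); the paper's Chebyshev step, as written, would give a cross term of order $\sigma\|\nu\|_2^2$ rather than $\sigma^2\|\nu\|_2^2$, a slip that is harmless for Theorem~\ref{thm:self-train} but which your decomposition avoids. Your explicit remark that both a lower and an upper bound on $\|\gamma\|_2^2$ are needed (the former to keep $a>0$, the latter to keep $b$ finite) is also a point the paper handles only implicitly.
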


\begin{proof}
Recall that $\Routood = \E_{y' \sim P_{y'\mid x'}'}[l(\gy(\hout(x')), y')]$.
Let ${\sigmau'}^2 = \E_{u' \sim P_u'}[(\dotp{\tu}{u'})^2]$.
We have $R^* = \sigma^2 + {\sigmau'}^2$.
Let $W = X \Bstar^{\top}$ be the feature matrix, where $W \in \R^{n \times k}$.

Letting $w' = \Bstar x'$, for the aux-outputs model we have
\begin{align}
&\; \E_{y' \sim P_{y'\mid x'}'}[l(\gy(\hout(x')), y')] \\
=&\; \E_{y' \sim P_{y' \mid x'}}[ (y' - \dotp{\twoutput}{w'})^2 ] \\
=&\; (\sigma^2 + {\sigmau'}^2) + (\dotp{\tw}{w'} - \dotp{\twoutput}{w'})^2 \\
=&\; R^* + (\dotp{\tw}{w'} - \dotp{\twoutput}{w'})^2. \label{eqn:output_error_lemma_decomp}
\end{align}

Let $\epsilon = Y - (W \tw + U \tu)$ be the noise of $Y$ for the training examples, which is a random vector with $\epsilon \in \R^n$.
We can now write
\begin{equation}
\label{eqn:output_param_error_lemma}
(\dotp{\tw}{{w'}} - \dotp{\twoutput}{{w'}})^2 = ((\epsilon + U \tu)^{\top} W (W^{\top} W)^{\minv} {w'})^2.
\end{equation}
By assumption, $W^{\top} W$ is invertible (almost surely).
With probability at least $1 - \delta/10$ all entries of $W^{\top} W$ are upper bounded and we condition on this.
So $(W^{\top} W)^{\minv}$ has min singular value bounded below.
By Lemma~\ref{lem:min-sing-lemma}, $W$ has min singular value that is bounded below with probability at least $1 - \delta/10$. We condition on this being true.
So let $\nu = W (W^{\top} W)^{\minv} w'$, so for some $c_0 > 0$, we have $\| \nu \|_2 \geq c_0 \| w' \|_2$.

In terms of $\nu$, we can write Equation~\ref{eqn:output_param_error_lemma} as
\begin{align}
\label{eqn:output_param_error_lemma_nu}
(\dotp{\tw}{{w'}} - \dotp{\twoutput}{{w'}})^2 &= ((\epsilon + U \tu)^{\top} \nu)^2 \\
&= (\epsilon^{\top} \nu)^2 + ((U \tu)^{\top} \nu)^2 + 2(\epsilon^{\top} \nu) ((U \tu)^{\top} \nu) \\
&\geq ((U \tu)^{\top} \nu)^2 + 2(\epsilon^{\top} \nu) ((U \tu)^{\top} \nu) \\
&\geq ((U \tu)^{\top} \nu)^2 - 2|\epsilon^{\top} \nu| \|(U \tu)^{\top}\|_2 \|\nu\|_2). \label{eqn:first_lower_bound}
\end{align}

We can find $b_0$ such that with at least probability $1 - \delta/10$, $\|(U \tu)^{\top}\|_2 \leq b_0$, condition on this.
We note that $\epsilon^{\top} \nu$ has variance $\sigma^2 \|\nu\|_2$ so by Chebyshev for some $b_1$ with probability at least $1 - \delta/10$, $|\epsilon^{\top} \nu| \leq b_1 \sigma^2 \|\nu\|_2$, condition on this.
So we can now bound Equation~\ref{eqn:first_lower_bound} and get:
\begin{equation}
(\dotp{\tw}{{w'}} - \dotp{\twoutput}{{w'}})^2 \geq ((U \tu)^{\top} \nu)^2 - 2 b_0 b_1 \sigma^2 \| \nu \|_2^2
\end{equation}

Now we apply Lemma~\ref{lem:lower_bound_random_product}, where we use the fact that $\sigmau^2 > 0$.
So given $\delta/10$, there exists some $c_1$ such that for every $\nu$ with probability at least $1 - \delta/10$, $((U \tu)^{\top} \nu)^2 \geq c_1 \| \nu \|_2^2$, giving us
\begin{equation}
(\dotp{\tw}{{w'}} - \dotp{\twoutput}{{w'}})^2 \geq (c_1 - 2 b_0 b_1 \sigma^2) \| \nu \|_2^2.
\end{equation}

Since $w'$ has bounded density everywhere, it is non-atomic and we get that there is some $c_2 > 0$ such that with probability at least $1 - \delta/10$, $\| w' \|_2^2 \geq c_2^2$.
But then $\| \nu \|_2^2 \geq c_0 c_2$, which gives us for some $a, b$,
\begin{equation}
(\dotp{\tw}{{w'}} - \dotp{\twoutput}{{w'}})^2 \geq (c_1 - 2 b_0 b_1 \sigma^2) c_0 c_2 \geq a - b \sigma^2.
\end{equation}
\ak{So I initially got confused about how to combine these since the ordering of these things is critical. The point is that for each fixed $\xtest$ with high probability we get the above lower bound on ${\epsilon'}^{\top} \nu $, although the set of $\epsilon'$ satisfying the lower bound can be different. But that's OK, it's something like for each $x$, the good event happens 90\% of the time, so overall it happens 90\% of the time.}
Combining this with Equation~\ref{eqn:output_error_lemma_decomp}, we get with probability at least $1 - \delta$,
\begin{equation}
\E_{y' \sim P_{y'\mid x'}'}[l(\gy(\hout(x')), y')] - R^* > a - b\sigma^2,
\end{equation}
as desired.
\end{proof}

\begin{lemma}
\label{lem:self_train_innout_upper_bound}
In the linear setting, for all problem settings $\probsetting$, for all $\delta$, there exists some $c > 0$ such that with probability at least $1 - \delta$ over the training examples and test example $x' \sim P_x'$ the risk of the In-N-Out model is upper bounded:
\begin{equation}
\Rinnoutood - R^* < c \sigma^2.
\end{equation}
\end{lemma}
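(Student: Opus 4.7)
The plan is to leverage the closed-form expression for $\twinnout$ from Lemma~\ref{lem:innout-alt-estimator-closed-form} and show that the excess risk decomposes as a quadratic form in the label noise $\epsilon$ whose magnitude scales only with $\sigma^2$, not with $\sigmau^2$. The key conceptual point is that when we regress $Y$ on $M := [W; U]$, the contribution of $U \tu$ to $Y$ is fully absorbed by the $U$-block of the regression, so only the $\sigma^2$-variance noise $\epsilon$ propagates into the error in $\twinnout$. This is exactly the mechanism by which In-N-Out beats the aux-outputs model, whose error depends on $\sigmau^2$ through Lemma~\ref{lem:self_train_outputs_lower_bound}.

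Concretely, since $Y = W \tw + U \tu + \epsilon$, substituting into the closed form of Lemma~\ref{lem:innout-alt-estimator-closed-form} gives $\twinnout - \tw = [I_k; 0] (M^\top M)^{\minv} M^\top \epsilon$ (the $U\tu$ term cancels because $[I_k; 0] (M^\top M)^\minv (M^\top M) = [I_k; 0]$). Decomposing the test risk exactly as in the beginning of Lemma~\ref{lem:self_train_outputs_lower_bound} (using independence of $\epsilon'$ and $u'$ and $\E[u'] = 0$), and setting $w' = \Bstar x'$, yields
\begin{equation}
\Rinnoutood - R^* = (\dotp{\tw - \twinnout}{w'})^2 = (\epsilon^\top \nu)^2, \qquad \nu := M (M^\top M)^{\minv} [I_k; 0]^\top w' \in \R^n,
\end{equation}
where $\nu$ depends only on the training covariates $X, U$ and the test covariate $x'$, and is independent of the training noise $\epsilon$.

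Next I deterministically bound $\|\nu\|_2^2 = w'^\top [I_k;0](M^\top M)^{\minv}[I_k;0]^\top w' \leq \|w'\|_2^2 / \singularmin(M)^2$. Since the rows of $M=[W;U]$ are i.i.d.\ draws from an absolutely continuous distribution on $\R^{k+m}$ (as $W_i = \Bstar X_i$ inherits absolute continuity from $X_i$ via the full-rank linear map $\Bstar$, and $U_i \perp W_i$), Lemma~\ref{lem:min-sing-lemma} gives $\singularmin(M) \geq c_0(\delta)$ with probability $\geq 1 - \delta/3$. Since $x'$ has bounded density and finite variance, $\|w'\|_2 \leq b_0(\delta)$ with probability $\geq 1 - \delta/3$. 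Finally, conditional on $\nu$, the scalar $\epsilon^\top \nu$ has mean zero and variance $\sigma^2 \|\nu\|_2^2$, so Chebyshev's inequality yields $(\epsilon^\top \nu)^2 \leq (3/\delta)\,\sigma^2 \|\nu\|_2^2$ with probability $\geq 1 - \delta/3$. A union bound over these three good events gives $\Rinnoutood - R^* \leq \frac{3\, b_0(\delta)^2}{\delta\, c_0(\delta)^2}\,\sigma^2 =: c\, \sigma^2$, as required.

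The main (minor) obstacle is justifying the application of Lemma~\ref{lem:min-sing-lemma} to $M=[W;U]$: we need the marginal distribution of $W_i$ on $\R^k$ to be absolutely continuous with respect to Lebesgue measure, which follows because $\Bstar\in\R^{k\times d}$ has full row rank and $X_i$ has bounded density on $\R^d$. Everything else is a routine bias-variance decomposition plus Chebyshev, and crucially does not require any lower bound on $\sigmau^2$, which matches the statement of the lemma (in contrast to Lemma~\ref{lem:self_train_outputs_lower_bound}, which needs $\sigmau^2 > 0$ to produce the gap $a - b\sigma^2$). Combining this upper bound with the lower bound of Lemma~\ref{lem:self_train_outputs_lower_bound} then gives Theorem~\ref{thm:self-train} by dividing.
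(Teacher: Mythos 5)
Your proposal is correct and follows essentially the same route as the paper's proof: the same closed form from Lemma~\ref{lem:innout-alt-estimator-closed-form} (with the $U\tu$ contribution absorbed by the $U$-block), the same quadratic form $(\epsilon^\top \nu)^2$ with $\nu$ controlled via Lemma~\ref{lem:min-sing-lemma} and a high-probability bound on $\|w'\|_2$, and a Markov/Chebyshev step to convert the $\sigma^2\|\nu\|_2^2$ variance into a high-probability bound. The only cosmetic difference is that the paper first averages over $w'$ and $\epsilon$ and then applies Markov, whereas you condition on $\nu$ and apply Chebyshev; your explicit justification that $W_i = \Bstar X_i$ is absolutely continuous is a nice touch the paper leaves implicit.
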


\begin{proof}
Recall that $\Rinnoutood = \E_{y' \sim P_{y'\mid x'}'}[l(\ginnout(\hout(x')), y')]$.
Let ${\sigmau'}^2 = \E_{u' \sim P_u'}[(\dotp{\tu}{u'})^2]$.
We have $R^* = \sigma^2 + {\sigmau'}^2$.
As before, let $W = X \Bstar^{\top}$ be the feature matrix, where $W \in \R^{n \times k}$.

Let $\Wu = [W; U]$ which denotes concatenating the matrices by column, so that $\Wu \in \mathbb{R}^{n \times (k + m)}$.
By Lemma~\ref{lem:min-sing-lemma}, ${\Wu}^{\top}\Wu$ has min singular value that is bounded below by $c_0$ with probability at least $1 - \delta/10$, we condition on this being true.
Now, as for the aux-outputs model, letting $w' = \Bstar x'$, we have
\begin{equation}
\E_{y' \sim P_{y' \mid w'}}[ (y' - \dotp{\twinnout}{w'})^2 ] = R^* + (\dotp{\tw}{w'} - \dotp{\twinnout}{w'})^2.
\end{equation}
For the second term on the RHS:
Let $R = [I_{k \times k}; 0_{k \times m}]$.
Let $\epsilon = Y - (W \tw + U \tu)$ be the noise of $Y$ for the training examples, which is a random vector with $\epsilon \in \R^n$.
From Lemma~\ref{lem:innout-alt-estimator-closed-form}, we can now write:
\begin{equation}
\label{eqn:w-err-rhs-term}
(\dotp{\tw}{w'} - \dotp{\twinnout}{w'})^2 = (w'^{\top} R ({\Wu}^{\top} {\Wu})^{\minv} {\Wu}^{\top} \epsilon )^2.
\end{equation}
$\|w'\|_2$ is bounded above by some constant $B_1$ with probability at least $1 - \delta/10$ which we condition on.
Now taking the expectation over $w'$ and $\epsilon$, using the fact that $R$ preserves the norm of a vector we can write
\begin{align}
& \; \E_{w', \epsilon} [ (w'^{\top} R ({\Wu}^{\top} {\Wu})^{\minv} {\Wu}^{\top} \epsilon )^2 ] \\
=& \; \sigma^2 \E_{w', \epsilon} [ (w'^{\top} R [{\Wu}^{\top} \Wu]^{\minv} R^T w') ] \\
\leq& \; \frac{\sigma^2}{c_0^2} \E_{w'}[\| w' \|_2^2] \\
\leq& \; \frac{B_1^2 \sigma^2}{c_0^2}.
\end{align}

Then, by Markov's inequality, with probability at least $1 - \delta / 10$ we can upper bound Equation~\ref{eqn:w-err-rhs-term} by $\frac{10B_1^2 \sigma^2}{\delta c_0^2}$.
In total, that gives us that for some $c$, with probability at least $1 - \delta$:
\begin{equation}
\label{eqn:innout-upperbound}
\E_{y' \sim P_{y'\mid x'}'}[l(\ginnout(\hout(x')), y')] - R^* < c \sigma^2.
\end{equation}
\end{proof}

The proof of Theorem~\ref{thm:self-train} simply combines Lemma~\ref{lem:self_train_outputs_lower_bound} and Lemma~\ref{lem:self_train_innout_upper_bound}.

\begin{proof}[Proof of Theorem~\ref{thm:self-train}]
For some $a, b, c$, with probability at least $1 - \delta$, we have for the aux-outputs model:
\begin{equation}
\Routood - R^* > a - b\sigma^2,
\end{equation}
and for the In-N-Out model:
\begin{equation}
\Rinnoutood - R^* < c \sigma^2.
\end{equation}
Taking ratios and dividing by suitable constants we get the desired result.
\end{proof}

\section{Experimental details}
\label{app:experiment-details}
\subsection{CelebA}
\label{app:celeba}

For the results in Table~\ref{table:main}, we used 7 auxiliary binary attributes included in the CelebA dataset: \verb!['Bald', 'Bangs', 'Mustache', 'Smiling', '5_o_Clock_Shadow',!
\verb!'Oval_Face', 'Heavy_Makeup']!.
These attributes tend to be fairly robust to our distribution shift (not hat vs. hat) --- if the person has a 5 o'clock shadow, the person is likely a man.
We use a subset of the CelebA dataset with 2000 labeled examples, 30k in-distribution unlabeled examples, 3000 OOD unlabeled examples, and 1000 validation, in-distribution test, and OOD test examples each.
We report numbers averaged over 5 trials, where on each trial, the in-distribution labeled / unlabeled examples are randomly re-sampled while the validation and test sets are fixed.
The backbone for all models is a ResNet-18~\citep{he2016resnet} which takes a CelebA image downsized to $64 \times 64$ and outputs a binary gender prediction.
All models are trained for 25 epochs using SGD with cosine learning rate decay, initial learning rate 0.1, and early stopped with an in-distribution validation set.
The gender ratios in the in-distribution and OOD set are balanced to 50-50.

\paragraph{Aux-inputs model.}
We incorporate the auxiliary inputs by first training a baseline model $\fbs$ from images to output logit, then training a logistic regression model on the concatenated features $[\fbs(x); z]$ where $z$ are the auxiliary inputs.
We sweep over L2 regularization hyperparameters $C=[0.1, 0.5, 1.0, 5.0, 10.0, 20.0, 50.0]$ and choose the best with respect to an in-distribution validation set.

\paragraph{Aux-outputs model.}
During pretraining, the model trains on the 7-way binary classification task of predicting the auxiliary information. Then, the model is finetuned on the gender classification task without auxiliary information.

\paragraph{In-N-Out and repeated self-training.}
For In-N-Out models with repeated self-training, we pseudolabeled all the unlabeled data using the In-N-Out model and did one round of additional self-training.
Following~\citep{kumar2020gradual}, we employ additional regularization when doing self training by adding dropout with probability 0.8.
We also reduced the learning rate to 0.05 to improve the training dynamics.

\paragraph{Adding auxiliary inputs one-by-one.}
In Figure~\ref{fig:celeba-chain}, we generate a random sequence of 15 auxiliary inputs and add them one-by-one to the model, retraining with every new configuration.
We use the following auxiliary information:
\verb|'Young'|, \verb|'Straight_Hair'|, \verb|'Narrow_Eyes'|, \verb|'Mouth_Slightly_Open'|, \verb|'Blond_Hair'|, \verb|'5_o_Clock_Shadow'|, \verb|'Big_Nose'|, \verb|'Oval_Face'|, \verb|'Chubby'|, \verb|'Attractive'|, \verb|'Blurry'|, \verb|'Goatee'|, \verb|'Heavy_Makeup'|, \verb|'Wearing_Necklace'|, and \verb|'Bushy_Eyebrows'|.

\paragraph{Correlation between in-distribution and OOD accuracy.}
In Figure~\ref{fig:celeba-correlation}, we sample 100 random sets of auxiliary inputs of sizes 1 to 15 and train 100 different aux-inputs models using these auxiliary inputs.
We plot the in-distribution and OOD accuracy for each model, showing that there is a significant correlation between in-distribution and OOD accuracy in CelebA, supporting results on standard datasets~\citep{recht2019doimagenet,xie2020selftraining,santurkar2020breeds}.
Each point in the plot is an averaged result over 5 trials.

\subsection{Cropland}
\label{app:cropland}
All models reported in Table~\ref{table:main} were trained using the Adam optimizer with learning rate $0.001$, a batch size of 256, and 100 epochs unless otherwise specified. Our dataset consists of about 7k labeled examples, 170k unlabeled examples (with 130k in-distribution examples), 7.5k examples each for validation and in-distribution test, and 4260 OOD test examples (the specification of OOD points is described in further detail below). Results are reported over 5 trials, and $\lambda\in \{0.5,0.6,0.7,0.8,0.9\}$ was chosen using the validation set.
On each trial, the in-distribution labeled / unlabeled examples are randomly re-sampled while the validation and test sets are fixed.

\paragraph{Problem Motivation.}
Developing machine learning models trained on remote sensing data is currently a popular line of work for practical problems such as typhoon rainfall estimation, monitoring reservoir water quality, and soil moisture estimation \citep{lary2016MLremotesensing, maxwell2018MLremotesensing, ahmad2010soilmoisture}.  Models that could use remote sensing data to accurately forecast crop yields or estimate the density of regions dedicated to growing crops would be invaluable in important tasks like estimating a developing nation's food security~\citep{li2007estimating}.

\paragraph{OOD Split.}
In remote sensing problems it is often the case that certain regions lack labeled data (e.g., due to a lack of human power to gather the labels on site), so extrapolation to these unlabeled regions is necessary. To simulate this data regime, we use the provided (lat, lon) pairs of each data point to split the dataset into labeled (in-distribution) and unlabeled (out-of-distribution) portions. Specifically, we take all points lying in Iowa, Missouri, and Illinois as our ID points and use all points within Indiana and Kentucky as our OOD set.

\paragraph{Shape of auxiliary info.}
To account for the discrepancy in shapes of the two sources of auxiliary information (latitude and longitude are two scalar measurements while the 3 vegetation bands form a $3 \times 50 \times 50$ tensor), we create latitude and longitude ``bands'' consisting of two $50 \times 50$ matrices that repeat the latitude and longitude measurement, respectively. Concatenating the vegetation bands and these two pseudo-bands together gives us an overall auxiliary dimension of $5 \times 50 \times 50$.

\paragraph{UNet.}
Since our auxiliary information takes the form of $50 \times 50$ bands, we need a model architecture that can reconstruct these bands in order to implement the aux-outputs and the In-N-Out models. With this in mind, we utilize a similar UNet architecture that~\citet{wang2020weakly} use on the same Cropland dataset. While the UNet was originally proposed by~\citet{ronneberger2015unet} for image segmentation, it can be easily modified to perform image-to-image translation. In particular, we remove the final $1 \times 1$ convolutional layer and sigmoid activation that was intended for binary segmentation and replace them with a single convolutional layer whose output dimension matches that of the auxiliary information. In our case, the last convolutional layer has an output dimension of 5 to reconstruct the 3 vegetation bands and (lat,lon) coordinates.

To perform image-level binary classification with the UNet, we also replace the final $1 \times 1$ convolutional layer and sigmoid activation, this time with a global average pool and a single linear layer with an output dimension of 1. During training we apply a sigmoid activation to this linear layer's output to produce a binary class probability, which is then fed into the binary cross entropy loss function.

\paragraph{Aux-inputs model.}
Since the original RGB input image is $3 \times 50 \times 50$, we can simply concatenate the auxiliary info alongside the original image to produce an input of dimensions $8 \times 50 \times 50$ to feed into the UNet.

\paragraph{Aux-outputs model.}
The modification of the traditional UNet architecture in order to support auxiliary outputs for Cropland is described in the above UNet section. We additional add a $\tanh$ activation function to squeeze the model's output values to the range $[-1, 1]$ (the same range as the images). We train the model to learn the auxiliary bands via pixel-wise regression using the mean squared error loss.

\paragraph{In-N-Out model.}
We found that the finetuning phase of the In-N-Out algorithm experienced wild fluctuations in loss and would not converge when using the hyperparameters listed at the top of this section. To encourage the model to converge and fit the training set, we decreased the Adam learning rate to 0.0001 and doubled the batch size to 512.

\paragraph{Repeated self-training.}
For the additional round of self-training, we initialize training and pseudolabel all unlabeled data with the In-N-Out model. Following~\citep{kumar2020gradual}, we employ additional regularization when doing self training by adding dropout with probability 0.8.

\subsection{Landcover}
\label{app:landcover}

Our Landcover dataset comes from NASA's MODIS Surface Reflectance product, which is made up of measurements from around the globe taken by the Terra satellite~\citep{modis2015landcover}. 
In each trial, we use about 16k labeled examples from non-African locations, 203k unlabeled examples (with 150k in-distribution examples), 9266 examples each for validation and in-distribution test, and 4552 OOD test examples.
We trained with SGD + momentum (0.9) on all models for 400 epochs with a cosine learning rate schedule. We used learning rate 0.1 for all models that were not pre-trained, and learning rate 0.01 for models that were already pre-trained.
Results are reported over 5 trials, and $\lambda\in \{0.1, 0.3, 0.5, 0.7, 0.9\}$ was chosen using the validation set.
On each trial, the in-distribution labeled / unlabeled examples are randomly re-sampled while the validation and test sets are fixed.

\paragraph{1D CNN}
While Convolutional Neural Networks are most commonly associated with the groundbreaking success of 2D-CNNs on image-based tasks, the 1-dimensional counterparts have also found success in various applications~\citep{kiranyaz2019cnn1d}. Because the measurements from the MODIS satellite are not images but instead scalar-valued time series data, we can use a 1D CNN with 7 channels, one for each of the 7 MODIS sensors.

\paragraph{NDVI}
The normalized difference vegetation index (NDVI) is a remote sensing measurement indicating the presense of live green vegetation. It has been shown to be a useful predictor in landcover-related tasks~\citep{defries1994ndvi,defries1995AVHRR,lunetta2006land}, so we choose to include it in our models as well. NDVI can be computed from the RED and NIR bands of the MODIS sensors via the equation 
\begin{equation}
	\text{NDVI} = (\text{NIR} - \text{RED}) / (\text{NIR} + \text{RED}).
\end{equation}

We include NDVI along with the 7 other MODIS bands to give us input dimensions of $46 \times 8$.

\paragraph{ERA5}
It is a reasonable hypothesis that having additional climate variables such as soil type or precipitation could be useful for a model in inferring the underlying landcover class. To this end we incorporate features from the ERA5 climate dataset as our auxiliary information~\citep{dataset2017era5}. The specific variables we include are soil type, temperature, precipitation rate, precipitation total, solar radiation, and cloud cover. For each MODIS point we find its nearest ERA5 neighbor based on their latitude and longitude in order to pair the datasets together.

The ERA5 measurements are monthly averages, which means the readings are at a different frequency than that of the 8-day MODIS time series. We upsample the ERA5 signal using the \verb|scipy.signal.resample| method, which uses the FFT to convert to the frequency domain, adds extra zeros for upsampling to the desired frequency, and then transforms back into the time domain.

\paragraph{Landcover classes.}
The Landcover dataset has a total of 16 landcover classes, with a large variance in the individual class counts. To ensure our model sees enough examples of each class, we filtered the dataset to include just 6 of the most populous classes: \verb|savannas|, \verb|woody_savannas|, \verb|croplands|, \verb|open_shrublands|, \verb|evergreen_broadleaf_forests|, and \verb|grasslands|.

\paragraph{Aux-inputs model.}
We concatenate the resampled ERA5 readings with the MODIS and NDVI measurements to obtain an input dimension of $46 \times 14$.

\paragraph{Aux-outputs model.}
Rather than predicting the entire ERA5 time series as an auxiliary output, we instead average the 6 climate variables over the time dimension and predict those 6 means as the auxiliary outputs. We use a smaller learning rate of 0.01 for this pre-trainined model.

\paragraph{In-N-Out and Repeated self-training.}
The In-N-Out model initializes its weights from the aux-outputs model and gets pseudolabeled ID unlabeled data from the aux-inputs model. As with aux-outputs, we use a smaller learning rate of 0.01 for this pre-training model.

For the additional round of self-training, we initialize training and pseudolabel all unlabeled data with the In-N-Out model. Following~\citep{kumar2020gradual}, we employ additional regularization when doing self training by adding dropout with probability 0.5. We found that with dropout, we need a higher learning rate (0.1) to effectively fit the training set.

\end{document}